\newtheorem{thm}{Theorem}
\newtheorem{rem}{Remark}
\newtheorem{problem}{Problem}
\title{\LARGE \bf
A Lower Bound on Observability for Target Tracking with Range Sensors and its Application to Sensor Assignment}
\author{Lifeng Zhou and Pratap Tokekar
\thanks{The authors are with the Department of Electrical and Computer Engineering, Virginia Tech
  USA. \texttt{\small \{lfzhou, tokekar\}@vt.edu}.}%
\thanks{This material is based upon work supported by the National Science
  Foundation under Grant Nos. 1566247 and 1637915.}}%
\begin{document}

\maketitle
\thispagestyle{empty}
\pagestyle{empty}

\begin{abstract}
We study two sensor assignment problems for multi-target tracking with the goal of improving the observability of the underlying estimator. In the restricted version of the problem, we focus on assigning unique pairs of sensors to each target. We present a $1/3$--approximation algorithm for this problem. We use the inverse of the condition number as the value function. If the target's motion model is not known, the inverse cannot be computed exactly. Instead, we present a lower bound for range-only sensing.

In the general version, the sensors must form teams to track individual targets. We do not force any specific constraints on the size of each team, instead assume that the value function is monotonically increasing and is submodular. A greedy algorithm that yields a $1/2$--approximation. However, we show that the inverse of the condition number is neither monotone nor submodular. Instead, we present other measures that are monotone and submodular. In addition to theoretical results, we evaluate our results empirically through simulations.
\end{abstract}

\section{Introduction}
State estimation is a fundamental problem in sensorics and finds many applications such as localization, mapping, and target tracking~\cite{montemerlo2002fastslam,durrant2006simultaneous}. The estimator performance can be improved by exploiting the \emph{observability} of the underlying system~\cite{gadre2004toward,papadopoulos2010cooperative,arrichiello2013observability,williams2015observability}. We study selecting sensors to improve the observability in tracking a potentially mobile target. 

Observability is a basic concept in control theory and has been widely applied in robotics. Observability for range-only beacon sensors, in particular, has been closely studied for underwater navigation. Gadre and Stilwell~\cite{gadre2004toward} analyzed the local and global observability~\cite{hermann1977nonlinear} for the localization of an Autonomous Underwater Vehicle by an acoustic beacon. The problems of single vehicle localization and multi-vehicle relative localization are studied in \cite{arrichiello2013observability} using an observability criterion introduced in \cite{krener2009measures}. In these works, it is the sensors that are moving. Consequently, the sensors know their control vectors and can thus, compute the observability matrix and its measures. In target tracking with fixed or mobile sensors, however, the control inputs for the targets are unknown. In recent work, Williams and Sukhatme~\cite{williams2015observability} studied a multi-sensor localization and target tracking problem where they showed how to leverage graph rigidity to improve the observability for sensor team localization and robust target tracking. 

While others have studied similar problems in the past~\cite{williams2015observability,arrichiello2013observability}, we focus uniquely on the case when the control inputs for the target are not known to the sensors. Consequently, we cannot compute the observability matrix of the resulting system. Our first contribution is to present a novel lower bound on the observability for the case of unknown target motion tracked by range-only sensors. Specifically, we show how to lower bound the \emph{condition number}~\cite{krener2009measures} of the partially known observability matrix using only the known part (Section~\ref{sec:lowerbound}). 


We then study two sensor assignment problems. In the first problem, the goal is to assign unique pairs of sensors to a target. Our second contribution is a greedy assignment algorithm for this problem. We prove that the greedy algorithm achieves a $1/3$--approximation of the optimal solution. We show that the greedy algorithm performs much better than $1/3$ in practice (Section~\ref{sec:simulation}-C). Since the optimal solution cannot be computed, in order to compare the greedy approach we also consider a relaxed version of the assignment problem which gives an upper bound for the optimal.

We then study a general assignment problem where a set of sensors are to be assigned to a specific target. There are no restrictions on the number of targets assigned to a specific sensor. Instead, we let the algorithm decide the optimal configuration of sensor teams assigned to each target. If the weight function is submodular and monotone\footnote{We use ``monotone" and ``monotone increasing" interchangeably.}, a greedy algorithm gives a $1/2$--approximation~\cite{nemhauser1978analysis}. However, our third contribution is to prove that the lower bound of the inverse of the condition number is neither submodular nor monotone. Instead, we use other observability measures such as the trace, log determinant and rank of symmetric observability matrix, and the trace of inverse symmetric observability matrix and show them to be submodular and monotone. We evaluate this algorithm through simulations where we find that sensors are assigned to targets almost uniformly (Section~\ref{sec:simulation}-D). 



\section{Problem Formulation for sensor assignment} \label{sec:pro_formulation}
We consider a scenario where there are $N$ sensors and $L$ targets in the environment. Our goal is to assign sensors to track the target. We use the notation $\sigma(l)$ to represent the set of sensors assigned to target $t_l$. Similarly, let $\sigma^{-1}(i)$ give the set of targets assigned to sensor $s_i$. We also use $\sigma_i(l)$ to give the $i^{th}$ sensor assigned to $t_l$. We order the assigned sensors by using their IDs such that $\sigma_1(l) < \sigma_2(l) < \sigma_3(l) < \ldots$.  Let $\omega(s_i, s_j, t_l)$, and $\omega(S_l, t_l)$ be some measure of the observability of tracking $t_l$ with $s_i$ and $s_j$, and with a set of sensor(s) $S_l$, respectively. 

We study the following sensor assignment problems. We start with the problem of assigning pairs of sensors to each target.\footnote{Theorems \ref{thm:sensor_n_1} and \ref{thm:sensor_n_gre2} show that at least two sensors are necessary when $\omega(\cdot)$ is the inverse condition number of the observability matrix.} 

\begin{problem}[Unique Pair Assignment] Given a set of sensor positions, $\mathcal{S} := \{s_0,\ldots,s_N\}$ and a set of target estimates at time $t$, $\mathcal{T} := \{t_0,\ldots,t_L\}$, find an assignment of unique pairs of sensors to targets:
\begin{equation}
\text{maximize} \sum_{i=1}^L \omega(\sigma_1(l),\sigma_2(l), t_l)
\end{equation}
with the added constraint each sensor is assigned to at most one target. That is, for all $i = 1,\ldots, N$ we have $|\sigma^{-1}(i)|\leq 1$, assuming $N \geq 2 L$. 
\label{prob:unique}
\end{problem}

We then study the general version of the problem where each target can be tracked by more than two sensors. That is, the sensors form teams of varying sizes to track individual targets. Sensors within a team can share measurements so as to better track the targets. We constrain each sensor to be assigned to only one target, that is, communicate with only one team. This is motivated by scenarios where sensing multiple targets can be time consuming (as is the case with radio sensors~\cite{tokekar2011active}) or communicating multiple measurements can be time and energy consuming.

\begin{problem}[General Assignment] Given a set of sensor positions, $\mathcal{S} := \{s_0,\ldots,s_N\}$ and a set of target estimates at time $t$, $\mathcal{T} := \{t_0,\ldots,t_L\}$, find an assignment of sets of sensors to targets:
\begin{equation}
\text{maximize} \sum_{i=1}^L \omega(\sigma(l), t_l)
\end{equation}
with the added constraint each sensor is assigned to at most one target. 
\label{prob:general}
\end{problem}

We present a $1/3$--approximation to solve Problem~\ref{prob:unique} for any $\omega(\cdot)$. Problem~\ref{prob:general} is the more general assignment problem which is difficult to solve for arbitrary $\omega(\cdot)$. However, for the specific class of submodular functions, there exists a $1/2$--approximation by a greedy algorithm~\cite{nemhauser1978analysis}. Submodularity captures the notion of diminishing returns, i.e., the marginal gain of assigning an additional sensor diminishes as more and more sensors are assigned to track the same target.

A typical measure of observability is the condition number of the observability matrix. When the target is moving, the condition number of the observability matrix cannot be computed, since the control input for the target is unknown to the sensors. We find a lower bound on the inverse of the condition number of the observability matrix. We use this lower bound as our measure, i.e., $\omega(\cdot)$, to find the assignments in Problem \ref{prob:unique}. However, we show that this measure is not submodular (in fact, not even monotone increasing). Instead, we can use a number of other measures (e.g., the trace of symmetric observability matrix, the trace of inverse symmetric observability matrix, log determinant and rank of the symmetric observability matrix) which we know to be submodular and monotone increasing (Theorem~\ref{thm:submodular_observability_measure}). We start by how to bound the inverse of condition number. 
\section{Bounding the Observability} \label{sec:lowerbound}
Consider a mobile target whose position is denoted by $o$. Suppose there are $N$ stationary sensors that can measure the distance\footnote{We use the square of the distance/range for mathematical convenience.} to the target. We have:
\begin{equation}
\left\{
                \begin{array}{ll}
                  \dot{o}=u_o,\\
                  z_{i}=h_i(o)=\frac{1}{2}\|p_i-o\|_{2}^{2}, ~i=1,...,N
                \end{array}
              \right.
\label{eqn:multi_sensor_target_system}
\end{equation}
where $o:=[o_x,o_y]^{T}$ gives the 2D position of the target, and $u_o:=[u_{ox},u_{oy}]^{T}$ defines its control input, which is unknown. We assume an upper bound on the control input, given by  $u_{o,\max}=\max\|u_o\|_{2}$. $z_{i}$ defines the range-only measurement from each sensor $s_i$ whose position is given by $p_i=[p_{ix},p_{iy}]^{T}$. For simplicity, we also assume that the target does not collide with any sensor, i.e., $\|p_{i}-o\|_{2} \neq 0$ and no two sensors are deployed at the same position. 


We analyze the weak local observability matrix, $O(o,u_o)$, of this multi-sensor target tracking system. We show how to lower bound the inverse of the condition number of $O(o,u_o)$, given by $C^{-1}(O(o,u_o))$, independent of $u_o$. We also show that the lower bound, $\underline{C}^{-1}(O_{i}(o,u_o))$, is tight.

We compute the local nonlinear observability matrix~\cite{hermann1977nonlinear,williams2015observability} for this system (Equation \ref{eqn:multi_sensor_target_system}) as, 
\begin{equation}
O(o,u_o)=\begin{bmatrix}
    \nabla L_{0}^{h_1} \\ 
     \nabla L_{1}^{h_1}\\ 
     \vdots\\
     \nabla L_{0}^{h_2} \\ 
     \nabla L_{1}^{h_2}\\ 
     \vdots\\
     \\
    \vdots\\
    \\
    \nabla L_{0}^{h_N} \\ 
     \nabla L_{1}^{h_N}\\ 
     \vdots
    
\end{bmatrix}=\begin{bmatrix}
    o_{x}-p_{1x},o_{y}-p_{1y} \\ 
     u_{ox},u_{oy}\\ 
     0,0\\
     \vdots\\
     o_{x}-p_{2x},o_{y}-p_{2y} \\ 
     u_{ox},u_{oy}\\ 
     0,0\\
     \vdots\\
     \\
    \vdots\\
    \\
    o_{x}-p_{Nx},o_{y}-p_{Ny} \\ 
     u_{ox},u_{oy}\\ 
     0,0\\
     \vdots
    
\end{bmatrix}.
\label{observability_multi_sensor_target}
\end{equation}
This equation can be rewritten as,
\begin{eqnarray}
O(o,u_o)&=&\begin{bmatrix}
    o_{x}-p_{1x},o_{y}-p_{1y} \\ 
     o_{x}-p_{2x},o_{y}-p_{2y} \\ 
    \vdots\\
    o_{x}-p_{Nx},o_{y}-p_{Ny} \\ 
     u_{ox},u_{oy}   
\end{bmatrix}_{(N+1)\times 2}.
\label{eqn:new_multi_sensor_target}
\end{eqnarray}
The state of the target $o$ is \emph{weakly locally observable} if the local nonlinear observability matrix has full column rank~\cite{hermann1977nonlinear}. However, the rank test for the observability of the system is a binary condition which does not tell the degree of the observability or how \emph{good} the observability is.  The \emph{condition number}~\cite{krener2009measures}, defined as the ratio of the largest singular value to the smallest, can be used to measure this degree of unobservability. A larger condition number suggests worse observability. We use the \emph{inverse of condition number} given as,
\begin{equation}
C^{-1}(O(o,u_o))=\frac{\sigma_{\min}(O(o,u_o))}{\sigma_{\max}(O(o,u_o))}
\label{eqn:inverse_condition_number}.
\end{equation}
Note that, $C^{-1} \in [0,1]$. $C^{-1}=0$ means  $O(o,u_o)$ is singular and $C^{-1}=1$ means $O(o,u_o)$ is \emph{well conditioned}. A larger $C^{-1}$ means better observability (see more details in~\cite{arrichiello2013observability}).

In the local nonlinear observability matrix $O(o,u_o)$, $u_o$ is unknown and not controllable by the sensor. On the other hand, $o-p_{i}$, depends on the relative state between each sensor $s_i$ and target $o$ and is known to the sensor (assuming an estimate of the target's position is known). The system can control $o-p_{i}$ either by moving the sensors or assigning new sensors to track the target. 
\begin{thm}
For the multi-sensor-target system (Equation \ref{eqn:multi_sensor_target_system}) with the number of sensors, $N\geq 2$, the inverse of the condition number is lower bounded by $\dfrac{\sigma_{\min}(O(o))}{\sqrt{\sigma^{2}_{\max}(O(o))+ u_{o,\max}^{2}}}$.
\label{thm:lower_bound}
\end{thm}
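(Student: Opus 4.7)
The plan is to split $O(o,u_o)$ into its known part and its unknown part and then bound the extremal singular values of the full matrix by those of the known part alone.

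First, I would observe from Equation \eqref{eqn:new_multi_sensor_target} that $O(o,u_o)$ decomposes as
\begin{equation}
O(o,u_o)=\begin{bmatrix} O(o) \\ u_o^{T} \end{bmatrix},
\end{equation}
where $O(o)$ is the $N\times 2$ matrix whose $i$-th row is $(o_x-p_{ix},\,o_y-p_{iy})$, and $u_o^T$ is the unknown last row. Forming the Gram matrix and using the outer product structure gives
\begin{equation}
O(o,u_o)^{T}O(o,u_o)=O(o)^{T}O(o)+u_o u_o^{T}.
\end{equation}
Since $u_o u_o^{T}$ is positive semidefinite with eigenvalues $\{0, \|u_o\|_2^2\}$, Weyl's inequality for sums of symmetric matrices yields the two key bounds
\begin{equation}
\sigma_{\min}^{2}(O(o,u_o))\geq \sigma_{\min}^{2}(O(o)),\qquad \sigma_{\max}^{2}(O(o,u_o))\leq \sigma_{\max}^{2}(O(o))+\|u_o\|_2^{2}.
\end{equation}

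Next, I would combine these two inequalities. Because both singular values involved are nonnegative, we can divide them and preserve the direction:
\begin{equation}
C^{-1}(O(o,u_o))=\frac{\sigma_{\min}(O(o,u_o))}{\sigma_{\max}(O(o,u_o))}\geq \frac{\sigma_{\min}(O(o))}{\sqrt{\sigma_{\max}^{2}(O(o))+\|u_o\|_2^{2}}}.
\end{equation}
Finally, applying the hypothesis $\|u_o\|_2\leq u_{o,\max}$ and noting that the right-hand side is monotonically decreasing in $\|u_o\|_2$ gives the claimed lower bound. The assumption $N\geq 2$, together with the fact that no two sensors coincide and no sensor lies at the target, guarantees that $O(o)$ has two linearly independent rows, so $\sigma_{\min}(O(o))>0$ and the bound is nontrivial.

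The main obstacle is verifying the $\sigma_{\min}$ bound carefully: the upper bound on $\sigma_{\max}$ follows immediately from the triangle-type Weyl inequality $\lambda_{\max}(A+B)\leq \lambda_{\max}(A)+\lambda_{\max}(B)$ applied to $A=O(o)^T O(o)$ and $B=u_o u_o^T$, but the lower bound on $\sigma_{\min}$ requires the companion inequality $\lambda_{\min}(A+B)\geq \lambda_{\min}(A)+\lambda_{\min}(B)$. This is where I would spend a sentence justifying that $\lambda_{\min}(u_o u_o^T)=0$ (since $u_o u_o^T$ is rank one in $\mathbb{R}^{2\times 2}$), so the additive perturbation can only increase the smallest eigenvalue. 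The tightness claim mentioned before the theorem would then follow by noting equality holds whenever $u_o$ is aligned with the right singular vector of $O(o)$ associated with $\sigma_{\max}(O(o))$ and has norm $u_{o,\max}$.
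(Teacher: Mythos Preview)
Your proposal is correct and follows essentially the same route as the paper: decompose $O(o,u_o)$ into the known block $O(o)$ and the unknown row $u_o^T$, form the Gram matrix $O(o)^TO(o)+u_ou_o^T$, apply Weyl's inequalities using the fact that the rank-one perturbation $u_ou_o^T$ has eigenvalues $\{0,\|u_o\|_2^2\}$, and finish by invoking $\|u_o\|_2\le u_{o,\max}$. One small caveat: your closing remark that $N\ge2$ with distinct sensors not at the target ``guarantees that $O(o)$ has two linearly independent rows'' is not quite right (two distinct sensors collinear with the target yield parallel rows), but this is a side comment about nontriviality of the bound and not part of the theorem's proof itself.
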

We present the full proof for this and all other results in the in the appendix.

We wish to improve the worst case, i.e., the lower bound of $C^{-1}(O(o,u_o))$, by optimizing the sensor-target relative state, which can be controlled by the sensor. For example, if the sensors are mobile, they can move so as to improve the lower bound. If only a subset of sensors are active at a time, we can choose the appropriate subset to improve the lower bound. In the following, we will show that at least two sensors are required to improve the lower bound.
\begin{thm}
The lower bound of the observability metric in one-sensor-target system, $\underline{C}^{-1}(O_{i}(o,u_o))$, cannot be controlled by the sensor.
\label{thm:sensor_n_1}
\end{thm}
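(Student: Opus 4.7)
The plan is to show that the lower bound from Theorem~\ref{thm:lower_bound} collapses to zero for any single-sensor configuration, making it invariant under the sensor's choice of placement. I would first specialize Equation~\ref{eqn:new_multi_sensor_target} to the case $N=1$, isolating the sensor-controllable block
\[
O_i(o) = \begin{bmatrix} o_x - p_{ix} & o_y - p_{iy} \end{bmatrix},
\]
which is a $1\times 2$ row vector. The key structural observation is that any $1\times 2$ matrix has rank at most one while the state dimension is two, so the Gram matrix $O_i(o)^{\top} O_i(o)$ is necessarily a $2\times 2$ matrix of rank at most one. Consequently $\sigma_{\min}(O_i(o)) = 0$ under the convention used in Theorem~\ref{thm:lower_bound} (smallest singular value identified with the square root of the smallest eigenvalue of the Gram matrix), and this is a purely dimensional fact, independent of where $p_i$ lies.

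Next, I would substitute this into the lower-bound expression to get
\[
\underline{C}^{-1}(O_i(o,u_o))
= \frac{\sigma_{\min}(O_i(o))}{\sqrt{\sigma_{\max}^{2}(O_i(o)) + u_{o,\max}^{2}}}
= \frac{0}{\sqrt{\|o - p_i\|_{2}^{2} + u_{o,\max}^{2}}}
= 0,
\]
for every admissible sensor position $p_i \neq o$. As a consistency check one could verify the same conclusion directly: choosing $u_o = (u_{o,\max}/\|o - p_i\|_{2})(o - p_i)$ makes the two rows of $O_i(o,u_o)$ parallel, forcing the full $2\times 2$ observability matrix to have $\sigma_{\min}(O_i(o,u_o))=0$ and hence $C^{-1}(O_i(o,u_o))=0$. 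Since the sensor does not control $u_o$, this worst case is always achievable regardless of $p_i$.

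Since the map $p_i \mapsto \underline{C}^{-1}(O_i(o,u_o))$ is the constant zero function, the sensor has no degree of freedom with which to raise the bound, which is exactly the claim of Theorem~\ref{thm:sensor_n_1}. The main obstacle I expect is notational rather than mathematical: one must be careful that the $\sigma_{\min}$ used here is the same convention already invoked in the proof of Theorem~\ref{thm:lower_bound}, since for a $1\times 2$ matrix the thin SVD would report a single nonzero singular value $\|o - p_i\|_{2}$ that could be mistaken for $\sigma_{\min}$. Once that convention is pinned down, the argument reduces to a one-line dimension count followed by substitution, and it also supplies the motivation, pursued in Theorem~\ref{thm:sensor_n_gre2}, for requiring at least two sensors per target.
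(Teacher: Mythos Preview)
Your proposal is correct and follows essentially the same approach as the paper: both specialize $O_i(o)$ to the single $1\times 2$ row $[o_x-p_{ix},\;o_y-p_{iy}]$, observe that $\sigma_{\min}(O_i(o))=0$ (the paper via an explicit eigenvalue computation of $O_i^{T}(o)O_i(o)$, you via the equivalent rank/dimension count), and substitute into the bound of Theorem~\ref{thm:lower_bound} to obtain $\underline{C}^{-1}=0$ identically in $p_i$. Your additional tightness check---exhibiting a $u_o$ parallel to $o-p_i$ that drives the true $C^{-1}(O_i(o,u_o))$ to zero---is a nice sanity observation that the paper does not include, but it is an embellishment rather than a different route.
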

When the number of sensors, $N\geq 2$, we have a positive result that shows that the sensors can improve the lower bound on the condition number of optimizing their positions.
\begin{thm}
Suppose that the number of sensors, $N \geq 2$. Even though the contribution to the observability matrix from the target's input, $O(u_o)$, is unknown and cannot be controlled, if the sensors increase $C^{-1}(O(o))$ and $\sigma_{\min}(O(o))$ (the inverse of condition number and the smallest singular number of the relative state contribution $O(o)$), then the lower bound of $C^{-1}(O(o,u_o))$ also increases. 
\label{thm:sensor_n_gre2}
\end{thm}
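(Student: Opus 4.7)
The plan is to exploit the explicit formula for the lower bound supplied by Theorem~\ref{thm:lower_bound},
$$\underline{C}^{-1}(O(o,u_o)) \;=\; \frac{\sigma_{\min}(O(o))}{\sqrt{\sigma_{\max}^{2}(O(o)) + u_{o,\max}^{2}}},$$
and to show that this expression is nondecreasing in the pair $(\sigma_{\min}(O(o)),\, C^{-1}(O(o)))$. To isolate exactly the two quantities that the hypothesis assumes to grow, I would introduce the change of variables $a := \sigma_{\min}(O(o))$ and $b := C^{-1}(O(o))$, so that $\sigma_{\max}(O(o)) = a/b$ and the lower bound takes the form
$$f(a,b) \;:=\; \frac{ab}{\sqrt{a^{2} + b^{2}\,u_{o,\max}^{2}}}.$$

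Next, I would verify monotonicity of $f$ in each variable separately by direct differentiation. A short computation yields
$$\frac{\partial f}{\partial a} \;=\; \frac{b^{3}\,u_{o,\max}^{2}}{(a^{2} + b^{2}\,u_{o,\max}^{2})^{3/2}}, \qquad \frac{\partial f}{\partial b} \;=\; \frac{a^{3}}{(a^{2} + b^{2}\,u_{o,\max}^{2})^{3/2}},$$
both of which are strictly positive whenever $a>0$ and $b>0$. Under the hypothesis $N \geq 2$, the relative-state block $O(o)$ generically has full column rank two, so $a = \sigma_{\min}(O(o)) > 0$ and $b = \sigma_{\min}/\sigma_{\max} \in (0,1]$; this guarantees that both partials are well defined and strictly positive.

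The theorem then follows from the standard two-step inequality: if the sensors are repositioned so that $a$ increases to $a'$ and $b$ increases to $b'$, then $f(a',b') \geq f(a',b) \geq f(a,b)$, with strict inequality as soon as either increase is strict. Since the left-hand side is precisely the new value of $\underline{C}^{-1}(O(o,u_o))$, this is exactly the stated conclusion.

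Conceptually the argument is short; the one subtlety — and the only genuine obstacle — is that any physical change in sensor configuration perturbs $a$ and $b$ simultaneously, so the claim cannot be read off from a single partial derivative along a one-parameter family. The change of variables above decouples the two effects, and the two-step chain using separate monotonicity in each argument handles the joint increase cleanly. Beyond that, the proof is algebraic bookkeeping of the two partial derivatives.
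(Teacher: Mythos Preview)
Your proof is correct. Both you and the paper start from the explicit lower-bound formula of Theorem~\ref{thm:lower_bound} and verify that it is nondecreasing in the two hypothesised quantities; the difference is only in how that monotonicity is exhibited. The paper works directly with the eigenvalues $\lambda_{\min},\lambda_{\max}$ of $O^{T}(o)O(o)$, forms the difference
\[
\frac{\lambda_{\min}'}{\lambda_{\max}'+u_o^{2}}-\frac{\lambda_{\min}}{\lambda_{\max}+u_o^{2}},
\]
puts it over a common denominator, and splits the numerator into two summands, each nonnegative under one of the two hypotheses. Your change of variables $a=\sigma_{\min}(O(o))$, $b=C^{-1}(O(o))$ is a refinement the paper does not make: it isolates precisely the two quantities assumed to grow and lets you read off monotonicity from the sign of each partial derivative, after which the two-step chain $f(a',b')\geq f(a',b)\geq f(a,b)$ is immediate. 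The paper's numerator splitting is the discrete analogue of the same observation. Both arguments are short and purely algebraic; yours is arguably a little cleaner, while the paper's version has the minor advantage of making explicit the weaker sufficient condition recorded in its Remark~\ref{remark:rem2}.
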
 
\begin{rem}
The lower bound $\underline{C}^{-1}(O(o,u_o))$ is tight when the target is known to be stationary. If $u_o\in\{0\}$, $O(o,u_o)=O(o)$ by Equation~\ref{eqn:new_multi_sensor_target} and Theorem \ref{thm:lower_bound}. Thus, the lower bound  $\underline{C}^{-1}(O(o,u_o))={C}^{-1}(O(o,u_o))$ implying that the lower bound is tight.
\end{rem}

Consider the special case when $N=2$. 
We use $i$ and $j$ to denote the two sensors tasked with tracking the target. The local observability matrix $O_{i,j}(o,u_o)$ for the $i-j-o$ system is given by,
\begin{equation}
O_{i,j}(o,u_o)=\begin{bmatrix}
    o_{x}-p_{ix} & o_{y}-p_{iy} \\ 
    o_{x}-p_{jx} & o_{y}-p_{jy} \\
     u_{ox} & u_{oy}
\end{bmatrix}.
\label{eqn:observability_i_j_o}
\end{equation}

For ease of notation, we represent the relative sensor-target position and sensor-target orientation in polar coordinates with the target at the center:
\begin{equation}
\left\{
                \begin{array}{ll}
                  p_{ix}-o_{x}=d_{io}\cos\theta_i,\\
                  p_{iy}-o_{y}=d_{io}\sin\theta_i,\\
                  p_{jx}-o_{x}=d_{jo}\cos\theta_j,\\
                  p_{jy}-o_{y}=d_{jo}\sin\theta_j.
                \end{array}
              \right.
\label{eqn:polar_relative_sensor_target}
\end{equation}
where $d_{io}$, $d_{jo}$, $\theta_i$ and $\theta_j$ indicate the relative distances and orientations between sensors $i,j$ and target $o$ as shown in Figure \ref{fig:sensor_target_polar}.

\begin{figure}[htb]
\centering
\includegraphics[width=0.6\columnwidth]{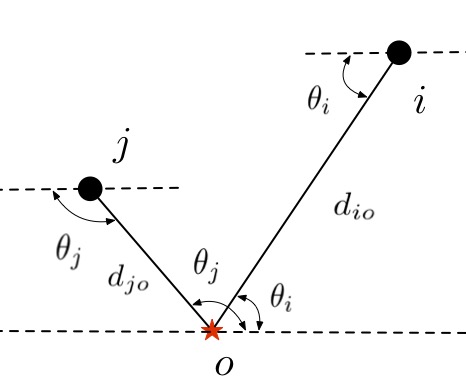}
\caption{Relative sensor-target position and sensor-target orientation in polar coordinates.\label{fig:sensor_target_polar}}
\end{figure}
\begin{thm}
The lower bound of the inverse of condition number of the observability matrix for $i-j-o$ system is given by,
\begin{eqnarray}
&&\underline{C}^{-1}(O_{i,j}(o,u_o)) = \nonumber\\
&&\sqrt{\frac{\lambda_{\min}(O^{T}(o)O(o))}{\lambda_{\max}(O^{T}(o)O(o)) + u_{o}^{2}}}=\nonumber\\
&& \sqrt{\frac{1+\alpha^{2}-\sqrt{1+\alpha^{4}+2\alpha^{2}\cos(2\theta_{ji})}}{1+\alpha^{2}+\sqrt{1+\alpha^{4}+2\alpha^{2}\cos(2\theta_{ji})}+2u_{o}^{2}/d_{io}^{2}}},~
\label{eqn:lower_bound_i_j_o}
\end{eqnarray}
where $\alpha:=d_{jo}/d_{io}$. 
\label{thm:lower_bound_ijo}
\end{thm}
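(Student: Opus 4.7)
The plan is to apply Theorem~\ref{thm:lower_bound} to the $2{\times}2$ relative-state block $O(o)$ coming from the $i$-$j$-$o$ system, and then evaluate the resulting expression by computing the eigenvalues of $O^T(o)O(o)$ in closed form.

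First I would write
\[
O(o)=\begin{bmatrix} o_x-p_{ix} & o_y-p_{iy}\\ o_x-p_{jx} & o_y-p_{jy}\end{bmatrix}
\]
and rewrite the lower bound from Theorem~\ref{thm:lower_bound} as
\[
\underline{C}^{-1}(O_{i,j}(o,u_o))=\sqrt{\dfrac{\lambda_{\min}(O^T(o)O(o))}{\lambda_{\max}(O^T(o)O(o))+u_o^2}},
\]
using $\sigma_{\min/\max}^2=\lambda_{\min/\max}$ for the Gram matrix. Note that $u_o^2$ here plays the role of $u_{o,\max}^2$ since we evaluate the bound at the worst-case input.

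Next I would substitute the polar parametrization~\eqref{eqn:polar_relative_sensor_target}, so that each row of $O(o)$ has the form $-d_{io}(\cos\theta_i,\sin\theta_i)$ or $-d_{jo}(\cos\theta_j,\sin\theta_j)$. Then I would compute directly
\[
\operatorname{tr}\bigl(O^T(o)O(o)\bigr)=d_{io}^2+d_{jo}^2,
\]
and, after expanding and collecting the cross terms via $\cos\theta_i\sin\theta_j-\cos\theta_j\sin\theta_i=\sin\theta_{ji}$,
\[
\det\bigl(O^T(o)O(o)\bigr)=d_{io}^2 d_{jo}^2\sin^2\theta_{ji}.
\]
The quadratic formula for the eigenvalues of a $2{\times}2$ symmetric matrix then gives
\[
\lambda_{\max/\min}=\tfrac{1}{2}\Bigl[(d_{io}^2+d_{jo}^2)\pm\sqrt{(d_{io}^2+d_{jo}^2)^2-4d_{io}^2 d_{jo}^2\sin^2\theta_{ji}}\Bigr].
\]

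The only non-mechanical step left is to massage the discriminant into the paper's form. Writing $\sin^2\theta_{ji}=(1-\cos 2\theta_{ji})/2$ and factoring out $d_{io}^4$, the radicand becomes $d_{io}^4(1+\alpha^4+2\alpha^2\cos 2\theta_{ji})$ with $\alpha=d_{jo}/d_{io}$, and the trace becomes $d_{io}^2(1+\alpha^2)$. Substituting into the boxed expression and cancelling the common factor $d_{io}^2/2$ from numerator and denominator (after absorbing $u_o^2$ as $\tfrac{d_{io}^2}{2}\cdot 2u_o^2/d_{io}^2$) yields~\eqref{eqn:lower_bound_i_j_o} exactly. The main obstacle is bookkeeping: keeping the $\sin$/$\cos$ double-angle identity and the normalization by $d_{io}^2$ aligned so that the final ratio matches the target expression; the underlying linear algebra is just the explicit eigendecomposition of a $2{\times}2$ symmetric matrix.
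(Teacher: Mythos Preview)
Your plan is correct and matches the paper's proof essentially step for step: write out the $2{\times}2$ block $O_{i,j}(o)$, pass to polar coordinates, compute the two eigenvalues of $O_{i,j}^T(o)O_{i,j}(o)$, and substitute into the bound from Theorem~\ref{thm:lower_bound}. The only cosmetic difference is that you obtain the eigenvalues via the trace/determinant shortcut and then simplify the discriminant with the double-angle identity, whereas the paper writes out the Gram matrix entries in polar form and states the eigenvalues directly; both routes yield the same radicand $d_{io}^4+d_{jo}^4+2d_{io}^2d_{jo}^2\cos(2\theta_{ji})$.
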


We plot $\underline{C}^{-1}(O_{i,j}(o,u_o))$ as a function of $\alpha \in [0, 5]$ and angle $\theta_{ji} \in [-\pi, \pi]$ with $u_{o,\max}$ and $d_{io}$ selected as $1~m/s$ and $1~m$, respectively (Figure \ref{fig:lower_bound_surf_contour}). Note that $\underline{C}^{-1}(O_{i,j}(o,u_o))$ reaches its maximum when $\alpha=1$ and $\theta_{ji}=\pm \frac{\pi}{2}$. That is, both sensors are at the same distance from the target and are perpendicular with respect to the target. On the other hand, when $\theta_{ji}=0$, $\theta_{ji}=\pi$, $\alpha=0$ or $\alpha \to \infty$, $\underline{C}^{-1}(O_{i,j}(o,u_o))$ reaches zero. We summarize the results in the following theorem.

\begin{figure*}[htb]
\centering{
\subfigure[]{\includegraphics[width=0.48\columnwidth]{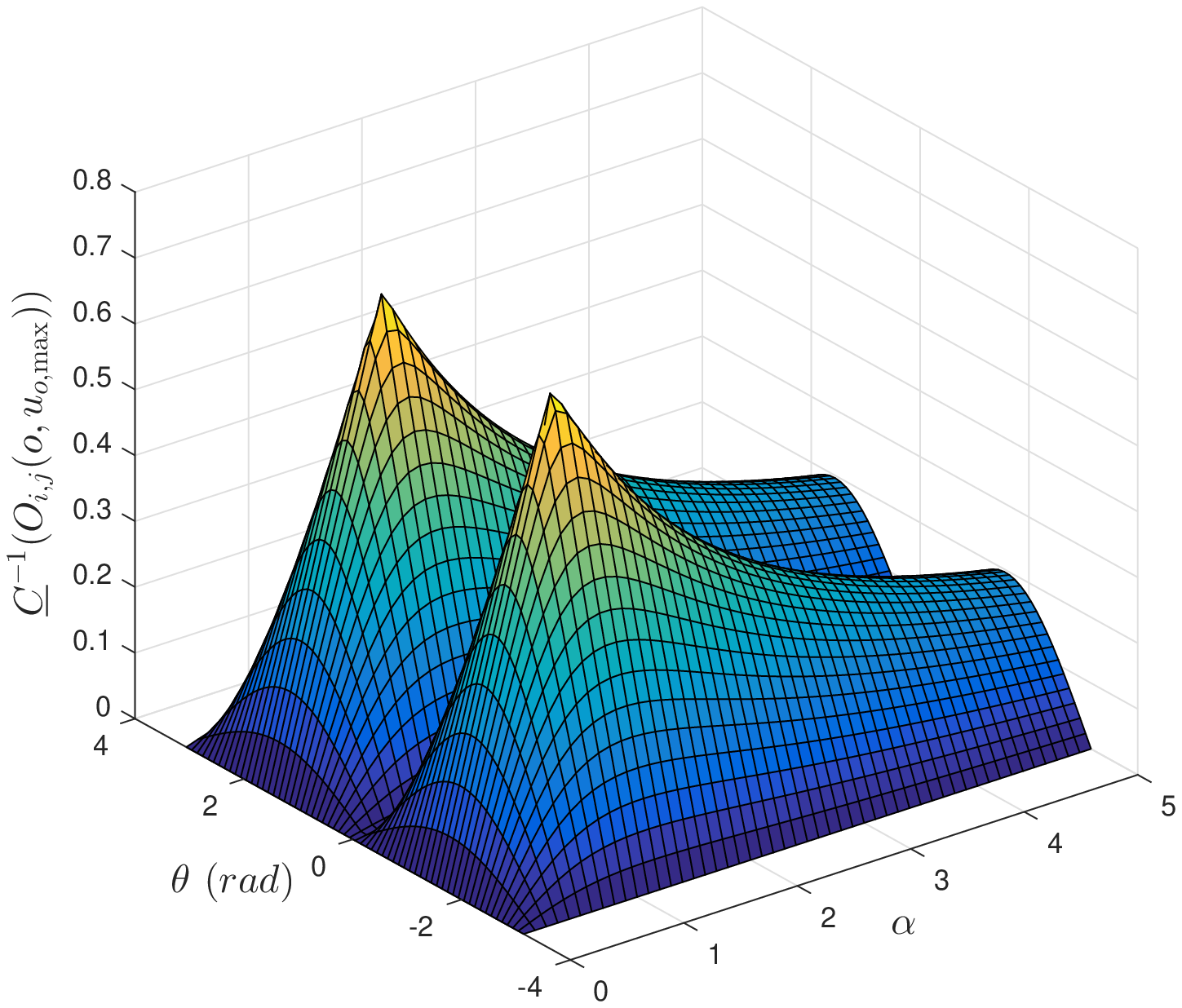}}
\subfigure[]{\includegraphics[width=0.48\columnwidth]{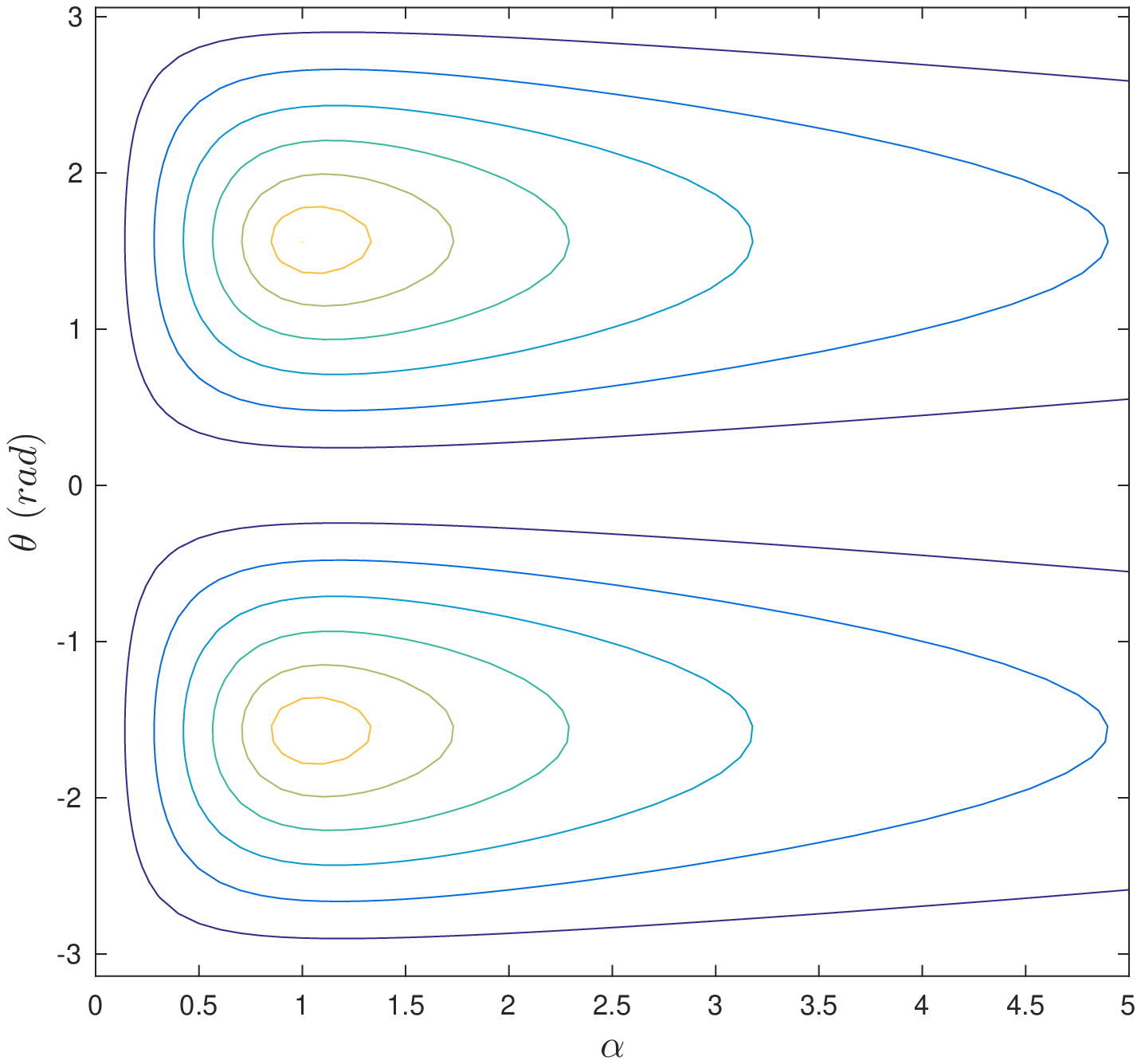}}
\subfigure[]{\includegraphics[width=0.48\columnwidth]{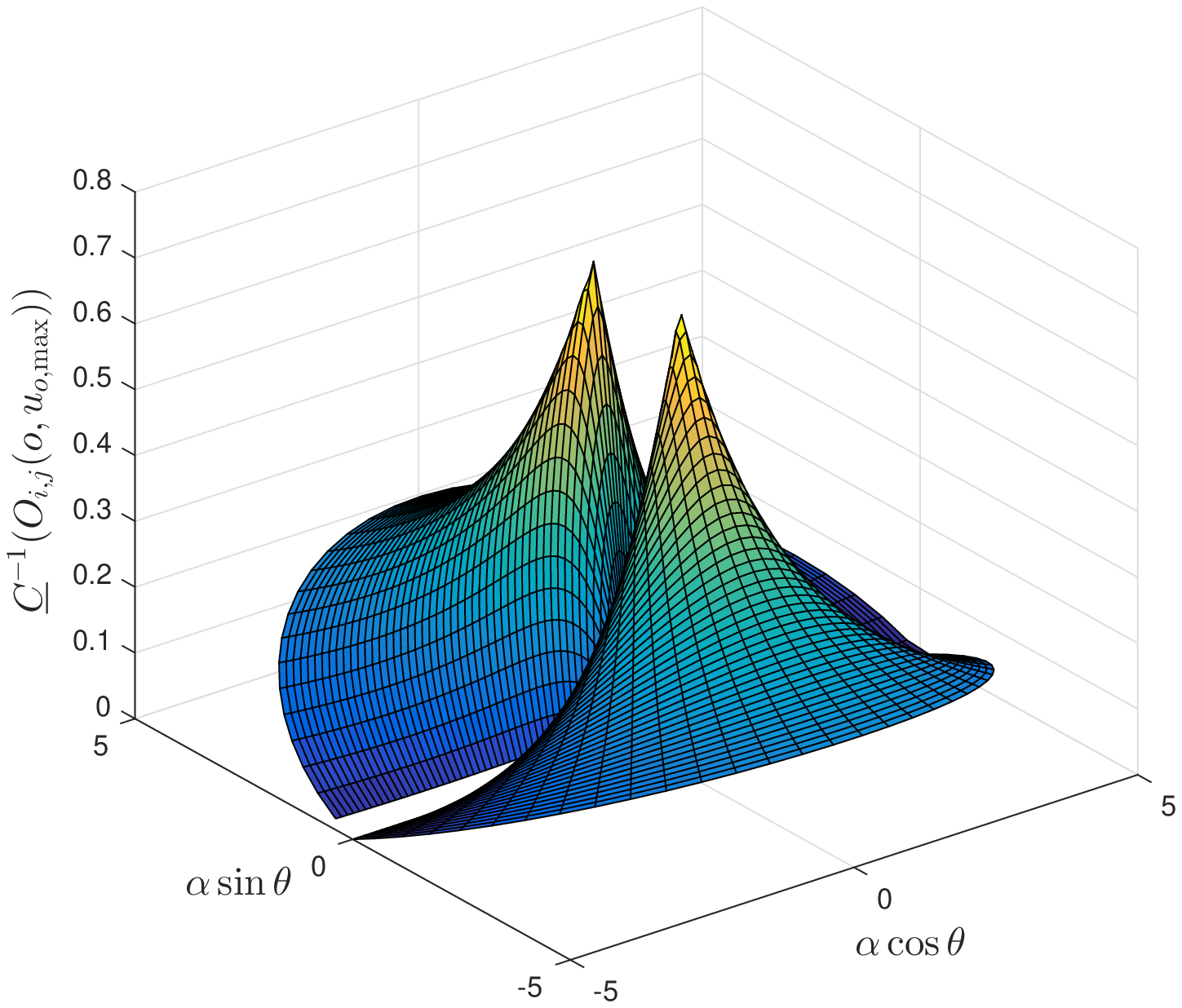}}
\subfigure[]{\includegraphics[width=0.48\columnwidth]{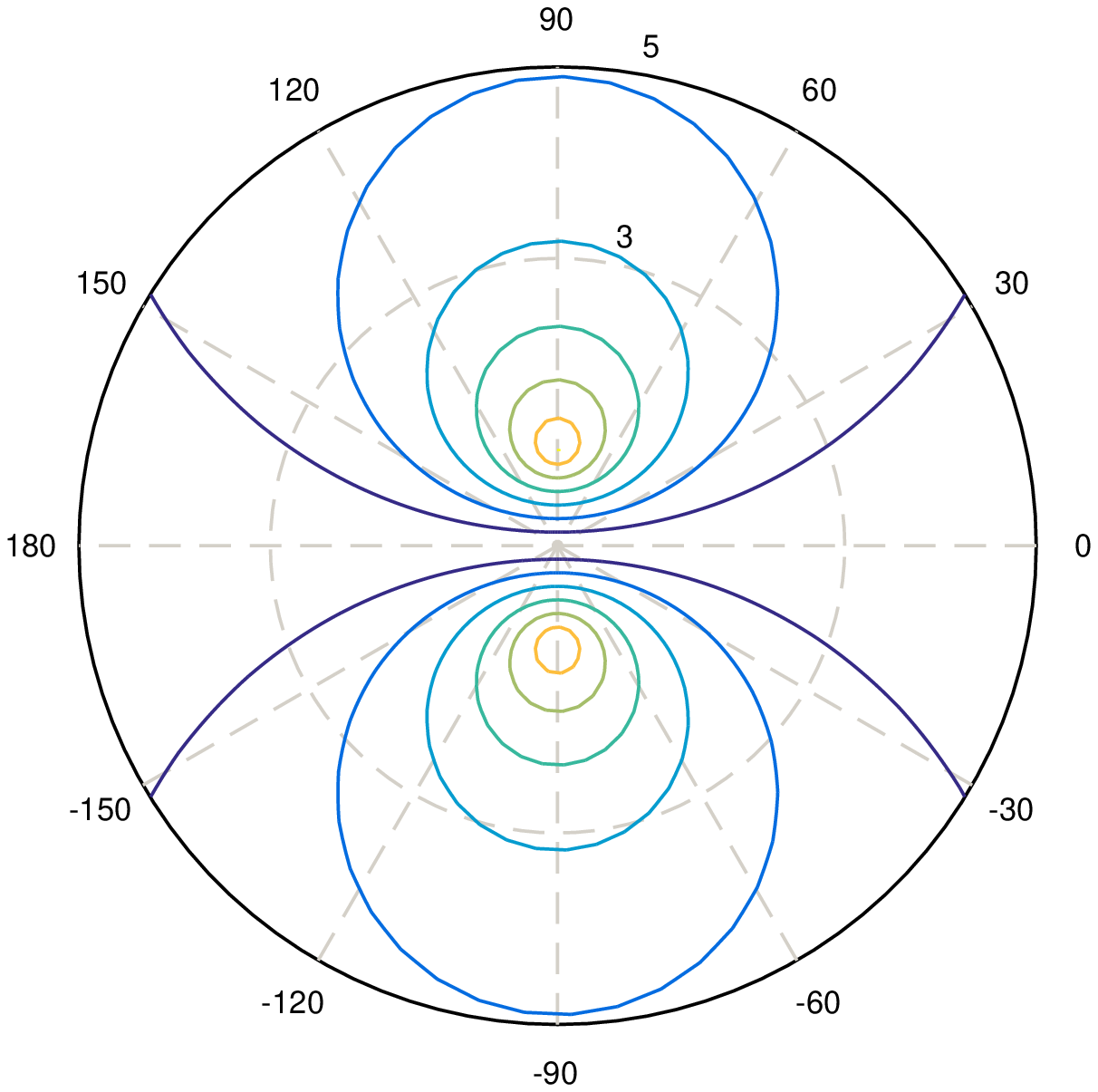}}
}
\caption{The lower bound of the inverse of condition number of the observability matrix for the $i-j-o$ system as a function of the distance ratio, $\alpha$, and angle $\theta$ in Cartesian (a, b) and Polar (c, d) coordinates.\label{fig:lower_bound_surf_contour}} 
\end{figure*}

\begin{thm}
The lower bound of the inverse of the condition number for the observability matrix for $i-j-o$ system, $\underline{C}^{-1}(O_{i,j}(o,u_o))$, reaches its maximum, $\sqrt{\frac{1}{1+u_{o}^{2}/d_{io}^{2}}}$, at $\alpha=1$ and $\theta_{ji}=\pm \frac{\pi}{2}$, and reaches its minimum, zero, at $\theta_{ji}=0$, $\theta_{ji}=\pi$, $\alpha=0$, or $\alpha \to \infty$. For a fixed $\theta_{ji} \neq 0, \pm \frac{\pi}{2}, \pi$, the minima occurs when $\alpha=0$ or $\alpha \to \infty$, but the maximum extreme point is not at $\alpha=1$.
\label{thm:$ij_pair_extreme$}
\end{thm}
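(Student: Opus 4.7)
The plan is to reparameterize the expression to expose a hidden monotonicity, which collapses the two-variable optimization into two elementary one-variable problems. Put $c := u_{o}^{2}/d_{io}^{2}$, $A(\alpha) := 1+\alpha^{2}$, and $B(\alpha,\theta_{ji}) := \sqrt{A^{2} - 4\alpha^{2}\sin^{2}\theta_{ji}}$. Using the identity $1+\alpha^{4}+2\alpha^{2}\cos(2\theta_{ji}) = (1+\alpha^{2})^{2} - 2\alpha^{2}(1-\cos 2\theta_{ji}) = A^{2} - 4\alpha^{2}\sin^{2}\theta_{ji}$, Equation~\ref{eqn:lower_bound_i_j_o} becomes $\underline{C}^{-1}(O_{i,j}(o,u_o)) = \sqrt{g(A,B)}$ with $g(A,B) := (A-B)/(A+B+2c)$.

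The key step is to notice that $g$ is strictly decreasing in $B$ on $[0,A]$, since $\partial g/\partial B = -2(A+c)/(A+B+2c)^{2}<0$. So for any fixed $\alpha$, maximizing $\underline{C}^{-1}$ over $\theta_{ji}$ amounts to minimizing $B$, which forces $\sin^{2}\theta_{ji}=1$, i.e., $\theta_{ji}=\pm\pi/2$. Substituting this gives $B=|1-\alpha^{2}|$, hence $A-B=2\min(1,\alpha^{2})$ and $A+B=2\max(1,\alpha^{2})$, so $g$ reduces to $\alpha^{2}/(1+c)$ on $\alpha\le 1$ and to $1/(\alpha^{2}+c)$ on $\alpha\ge 1$. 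Both pieces are maximized at $\alpha=1$ with common value $1/(1+c)$, yielding the advertised maximum $\sqrt{1/(1+c)}$ at $(\alpha,\theta_{ji})=(1,\pm\pi/2)$.

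For the minima, I would treat the boundary cases directly. If $\theta_{ji}\in\{0,\pi\}$ then $\sin^{2}\theta_{ji}=0$ gives $B=A$, so the numerator of $g$ vanishes; if $\alpha=0$ then $A=B=1$, again zero; and as $\alpha\to\infty$ a first-order expansion yields $A-B\to 2\sin^{2}\theta_{ji}$ (bounded) while $A+B+2c\sim 2\alpha^{2}\to\infty$, so $g\to 0$. In every case $\underline{C}^{-1}\to 0$, as claimed.

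Finally, fix $\theta_{ji}$ with $\sin^{2}\theta_{ji}\in(0,1)$. The preceding paragraph shows $\underline{C}^{-1}\to 0$ at both boundaries $\alpha\to 0^{+}$ and $\alpha\to\infty$, while $\underline{C}^{-1}>0$ strictly in between (since $B<A$). To show the maximum is not located at $\alpha=1$, I would switch variables to $u:=\alpha^{2}$ and differentiate. With $v:=2\sin^{2}\theta_{ji}$, the relation $B^{2}=A^{2}-2uv$ gives $B'=(A-v)/B$, and the quotient rule simplifies (after using $A^{2}-B^{2}=2uv$) to
\begin{equation*}
\frac{dg}{du} \;=\; \frac{2}{B(A+B+2c)^{2}}\Bigl[\,v(1-u+c) + c(B-A)\,\Bigr].
\end{equation*}
Evaluating at $u=1$ gives $A=2$, $B=2|\cos\theta_{ji}|$, and the bracket becomes $c(v+B-A)=2c|\cos\theta_{ji}|(1-|\cos\theta_{ji}|)$, which is strictly positive whenever $|\cos\theta_{ji}|\in(0,1)$. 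Hence $\alpha=1$ is not a critical point, so the maximum over $\alpha$ must lie elsewhere. The main obstacle is precisely this last derivative computation: although routine, the quotient-rule expression only collapses to a sign-transparent form after repeatedly invoking $A^{2}-B^{2}=2uv$ and $B'=(A-v)/B$, and it is the monotonicity-in-$B$ observation in the first step that keeps every other piece of the argument elementary.
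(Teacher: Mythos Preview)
Your argument is correct. The route, however, differs from the paper's in a way worth noting. The paper proceeds by direct calculus: it sets $\partial\underline{C}^{-1}/\partial\theta_{ji}=0$, applies the second-derivative test to identify $\theta_{ji}=\pm\pi/2$ as maxima and $\theta_{ji}=0,\pi$ as minima, then repeats the first/second-derivative analysis in $\alpha$ along the slice $\theta_{ji}=\pm\pi/2$; for the last claim it writes out the critical-point equation in $\alpha$ and asserts (without the explicit check you provide) that $\alpha=1$ fails to satisfy it. Your reparameterization $g(A,B)=(A-B)/(A+B+2c)$ with the monotonicity observation $\partial g/\partial B<0$ replaces the entire $\theta_{ji}$ second-derivative test by a one-line argument, and the piecewise form at $\theta_{ji}=\pm\pi/2$ makes the $\alpha$-optimization equally transparent. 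For the final claim, both approaches ultimately evaluate the same quantity at $\alpha=1$; your computation of the bracket as $2c|\cos\theta_{ji}|(1-|\cos\theta_{ji}|)$ is in fact the missing verification the paper skips. The trade-off: the paper's proof is mechanical and requires no insight, while yours front-loads a substitution whose payoff is that every subsequent step becomes elementary and sign-transparent. One small point to make explicit: your conclusion that $dg/du|_{u=1}>0$ relies on $c>0$, i.e., $u_{o}\neq 0$, which is implicit in the paper's setting but worth stating.
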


Next, we solve two assignment problems for selecting sensor(s) to track  target(s). 
\section{Assignment Algorithms}
So far, we have assumed that we know the true position, $o(k)$, of the target at time $k$. In practice, we only have an estimate, $\hat{o}(k)$, for $o$ along with its covariance $\Sigma(k)$. The estimate is obtained by fusing the past measurements obtained by the sensors using, for example, an Extended Kalman Filter (EKF). 

Since the sensors know the probability distribution of  target's state with mean $\hat{o}$, and covariance $\Sigma$, they can calculate the relative distances, $d_{io}$, by using the Mahalanobis distance \cite{mahalanobis1936generalized}, 
\begin{equation*}
d_{io}=\sqrt{(\hat{o}-p_{i})^{T}\Sigma^{-1}(\hat{o}-p_{i})}, ~i\in\{1,,...,N\}.
\label{eqn:mahalanobis}
\end{equation*}


\subsection{A $1/3$--approximation algorithm for Problem~\ref{prob:unique}} \label{sec:selfishmatching}

We propose a greedy algorithm to solve Problem~\ref{prob:unique}. In each round, we calculate the observability metric, $\omega(\sigma_1(l),\sigma_2(l), t_l)$ for
all triples $(\sigma_1(l),\sigma_2(l), t_l), ~\sigma_1(l),\sigma_2(l)\in \mathcal{S},~t_l\in\mathcal{T}$, and select the triple which has the maximum $\omega(\sigma_1(l),\sigma_2(l), t_l)$, then remove 
$\{\sigma_1(l),\sigma_2(l)\}$ from sensor set $\mathcal{S}$ and remove $t_l$ from target set $\mathcal{T}$, respectively. We present the greedy approach in Algorithm \ref{algorithm:unique_pair_assignment} where $\omega(\textrm{GREEDY})$ denotes total value charged by the greedy approach. We can use the inverse of the condition number (Equation~\ref{eqn:lower_bound_i_j_o}) as $\omega(\cdot)$.
\begin{algorithm}
$k\leftarrow 0, ~\omega(\textrm{GREEDY})\leftarrow 0$\\
\While{true}{
Compute all possible 
$\omega(\sigma_1(l),\sigma_2(l), t_l)$.\\
Select 
triple $(\sigma_1(l),\sigma_2(l), t_l)$ with maximum $\omega(\sigma_1(l),\sigma_2(l), t_l)$ defined as $\omega_{\max}$.\\ $\omega(\textrm{GREEDY})\leftarrow \omega(\textrm{GREEDY})+\omega_{\max}$.\\
Remove $\{s_i,s_j\}$ from union sensor $S$ and remove $t_l$ from target set $\mathcal{T}$.\\
$k\leftarrow k + 1$
}   
 \caption{Greedy Unique Pair Assignment}
 \label{algorithm:unique_pair_assignment}
\end{algorithm}

We present the following lemmas to guarantee the effectiveness of the greedy algorithm. 
\begin{thm}
$\omega(\textrm{GREEDY}) \geq \frac{1}{3}\textrm{OPT}$ where \textrm{OPT} is the optimal algorithm for Problem~\ref{prob:unique}.
\label{thm:lem_optimal_fin}
\end{thm}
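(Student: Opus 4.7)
The plan is to use a charging argument analogous to the classical greedy analysis of weighted $3$-dimensional matching. Let $G=(g_1,g_2,\ldots)$ denote the triples selected by Algorithm~\ref{algorithm:unique_pair_assignment} in the order they are chosen, and let $O^*$ be any optimal assignment for Problem~\ref{prob:unique} with total value $\textrm{OPT}$. Each greedy triple $g_i$ consists of exactly two sensors and one target, so it ``collides'' with at most three resources of the optimal assignment, which is exactly the source of the $1/3$ factor.

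The key step will be to partition the triples of $O^*$ among the greedy triples. For each $g_i$ I define $B(g_i)$ to be the set of $o\in O^*$ that share at least one component (a sensor or the target) with $g_i$ and have not already been assigned to some $B(g_j)$ with $j<i$. I then plan to establish three facts. First, $|B(g_i)|\le 3$ for every $i$: since $g_i$ contains only two sensors and one target, and the optimal assignment uses each sensor (respectively target) in at most one triple, at most two OPT triples can collide with $g_i$ through a sensor and at most one through the target. Second, every $o\in O^*$ lies in some $B(g_i)$, for otherwise $o$ shares no component with any greedy triple and its two sensors and its target remain unused when greedy terminates, contradicting the fact that greedy halts only when no feasible triple remains. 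Third, for every $o\in B(g_i)$ we have $\omega(o)\le \omega(g_i)$: by the defining property of $B(g_i)$, none of the components of $o$ had been removed prior to iteration $i$, so $o$ was still a feasible candidate at that step, and greedy's maximality rule forces $\omega(g_i)\ge \omega(o)$.

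Combining these three facts yields
\begin{equation*}
\textrm{OPT} \;=\; \sum_{o\in O^*} \omega(o) \;=\; \sum_i \sum_{o\in B(g_i)} \omega(o) \;\le\; 3\sum_i \omega(g_i) \;=\; 3\,\omega(\textrm{GREEDY}),
\end{equation*}
which is the desired bound. The main obstacle I anticipate is pinning down the second fact cleanly: Algorithm~\ref{algorithm:unique_pair_assignment} has no explicit termination condition, so I will interpret its \emph{while} loop as halting once no valid triple $(\sigma_1(l),\sigma_2(l),t_l)$ remains (fewer than two free sensors or no free targets), and then I must rule out the possibility that greedy stops while an unblocked OPT triple still has all three of its components free. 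Everything else reduces to bookkeeping on the resources consumed per pick.
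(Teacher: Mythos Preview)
Your proposal is correct and is essentially the same charging argument the paper uses: each greedy triple blocks at most three OPT triples, and each blocked OPT triple was still feasible at the round it first collides with a greedy pick, hence has weight at most that pick's. The paper phrases this as a three-case analysis (depending on how many components the greedy pick shares with a single OPT triple) rather than via your uniform partition $B(g_i)$, but the content and the key inequality are identical; your concern about the implicit termination condition is well placed and your reading of it is the intended one.
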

\begin{proof}
Suppose GREEDY picks triple $(s_i,s_j,t_l)$ in the $k^{th}$ round. We will charge the value $\omega(s_i,s_j,t_l)$ to at most three triples in OPT that have not been charged in previous rounds. Furthermore, if a triple in OPT is charged in the $k^{th}$ round, then we show that the $\omega(\cdot)$ value of the triple is less than $\omega(s_i,s_j,t_l)$.

There are three cases. 
\begin{enumerate}
\item $(s_i,s_j,t_l)$ is also chosen by OPT. In this case, we will charge exactly $\omega(s_i,s_j,t_l)$ to the triple $(s_i,s_j,t_l)$ in OPT, if it has not been charged previously. 
\item Exactly two of $(s_i,s_j,t_l)$ appear in a triple chosen by OPT. Consider the case where OPT chooses $(s_i,s_j,t_m)$ where $m\neq l$. All other cases are symmetric. In this case, we need to charge $\omega(s_i,s_j,t_l)$ to at most two triples --- $(s_i,s_j,t_m)$ and the one containing $t_l$, say $(s_p,s_q,t_l)$. If $(s_i,s_j,t_m)$ has not been charged previously, then it must mean that GREEDY has not assigned any sensors to $t_m$ in previous rounds. Since GREEDY chose $(s_i,s_j,t_l)$ in the $k^{th}$ round, it must mean $\omega(s_i,s_j,t_l) \geq \omega(s_i,s_j,t_m)$, otherwise GREEDY would have chosen $(s_i,s_j,t_m)$ in the $k^{th}$ round. Likewise, if $(s_p,s_q,t_l)$ has not been charged previously, we must have $\omega(s_i,s_j,t_l) \geq \omega(s_p,s_q,t_l)$. Therefore, the value charged in the $k^{th}$ round will be at most twice of $(s_i,s_j,t_l)$.
\item No two of $(s_i,s_j,t_l)$ appear in the same triple chosen by OPT. In this case, we can charge the value of $\omega(s_i,s_j,t_l)$ to at most three triples. Using an argument similar to the previous case, we can say that the value charged in the $k^{th}$ round will be at most thrice of $\omega(s_i,s_j,t_l)$.
\end{enumerate}

Therefore, once all triples in GREEDY are charged, it follows that $\omega(\textrm{GREEDY}) \geq \frac{1}{3}\textrm{OPT}$.
\end{proof}


\subsection{General assignment using Submodular Welfare Optimization}\label{sec:general_assign}
The sensor assignment problem where only two sensors are assigned is discussed above. We now study a more general assignment (Problem~\ref{prob:general}) where each target $t_l$ is tracked by a subset of sensors $\sigma(l)\subset S, ~l\in\{1,2,...,L\}$ whose cardinality is not necessarily two. 
This is known as \emph{submodular welfare problem} in the literature\cite{vondrak2008optimal} where the objective is to maximize $\sum_{i=1}^{n}w_i(S_i)$ for independent sets $\{S_i| S_i\in S, ~i=\{1,2,...n\}\}$ by using monotone and
submodular utility functions $w_i$. A greedy algorithm~\cite{nemhauser1978analysis} yields a $1/2$--approximation for this problem. 
We first show that the lower bound of the inverse of the condition number is neither monotone nor submodular which makes the optimization problem much more challenging.
\begin{thm}
The lower bound of the inverse of condition number function $\omega(\cdot)$ is neither monotone increasing nor submodular. 
\label{Them:invcond_not_sub}
\end{thm}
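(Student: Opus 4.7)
The plan is to establish both failures by exhibiting explicit sensor configurations and computing $\omega(\cdot)$ directly from the closed‑form lower bound in Theorem~\ref{thm:lower_bound}. Since for $N \geq 2$ stationary sensors, $O(o)$ is an $N\times 2$ matrix whose Gram matrix $O(o)^{T}O(o)$ is a $2\times 2$ symmetric PSD matrix, and adding a sensor appends one row to $O(o)$ and therefore adds a rank‑one PSD term $v v^{T}$ to $O(o)^{T}O(o)$, I can compute $\sigma_{\min}$ and $\sigma_{\max}$ in closed form for any small configuration by diagonalizing a $2\times 2$ matrix. This is the key simplification that makes counterexamples easy to verify.

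For \emph{non‑monotonicity}, I would place the target at the origin and start from $A=\{s_1,s_2\}$ with $s_1=(1,0)$, $s_2=(0,1)$, which gives $O(o)^{T}O(o)=I$ and hence $\omega(A)=1/\sqrt{1+u_{o,\max}^{2}}$, the maximum possible value at unit distance by Theorem~\ref{thm:$ij_pair_extreme$}. Then I add a third sensor $s_3=(D,0)$ with $D$ large; the new Gram matrix becomes $\mathrm{diag}(1+D^{2},\,1)$, so $\sigma_{\min}=1$ stays fixed while $\sigma_{\max}=\sqrt{1+D^{2}}$ grows, giving $\omega(A\cup\{s_3\})=1/\sqrt{1+D^{2}+u_{o,\max}^{2}}<\omega(A)$. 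This disproves monotonicity.

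For \emph{non‑submodularity} I would use a symmetric four‑sensor construction: $s_1=(1,0)$, $s_2=(0,1)$, $s_3=(0,D)$, $s_4=(D,0)$ with $D$ large, and take $A=\{s_1,s_2\}$, $B=\{s_1,s_2,s_4\}$, $x=s_3$. A direct $2\times 2$ eigenvalue computation gives
\begin{align*}
\omega(A) &= \tfrac{1}{\sqrt{1+u_{o,\max}^{2}}}, &
\omega(B) &= \tfrac{1}{\sqrt{1+D^{2}+u_{o,\max}^{2}}},\\
\omega(A\cup\{x\}) &= \tfrac{1}{\sqrt{1+D^{2}+u_{o,\max}^{2}}}, &
\omega(B\cup\{x\}) &= \sqrt{\tfrac{1+D^{2}}{1+D^{2}+u_{o,\max}^{2}}}.
\end{align*}
As $D \to \infty$, the marginal gain $\omega(A\cup\{x\})-\omega(A)$ tends to $-1/\sqrt{1+u_{o,\max}^{2}}<0$, while $\omega(B\cup\{x\})-\omega(B)$ tends to $1$. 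Thus for any sufficiently large $D$,
\[
\omega(A\cup\{x\})-\omega(A) \;<\; \omega(B\cup\{x\})-\omega(B),
\]
violating the diminishing‑returns inequality, so $\omega(\cdot)$ is not submodular.

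The only subtle step is verifying that the Gram matrices in the non‑submodularity construction are indeed diagonal with the eigenvalues I claim; this is immediate from the fact that the chosen sensor offsets lie along the two coordinate axes, so the cross terms vanish. The main conceptual obstacle was finding a configuration where adding the same sensor helps more when applied to the larger superset, and the resolution is exactly the ``rescue'' mechanism above: the larger set $B$ has an unbalanced spectrum ($\sigma_{\max}\gg\sigma_{\min}$), which $x=s_3$ repairs by boosting the deficient eigendirection, whereas for $A$ the spectrum is already balanced and $x$ only inflates $\sigma_{\max}$.
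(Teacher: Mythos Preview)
Your proof is correct and follows the same overall strategy as the paper: both establish the claim by exhibiting explicit counterexamples to monotonicity and to the diminishing-returns inequality. The paper simply fixes concrete numerical coordinates (e.g., $s_1=(0,0)$, $s_2=(2\sqrt{3},-9)$, $s_3=(\sqrt{3},3)$, target $(\sqrt{3},1)$, $u_{o,\max}=1$) and reports the resulting $\omega$-values to three decimals; no structural explanation is given.

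Your construction is different in execution and, in several respects, cleaner. By placing the target at the origin and all sensors on the coordinate axes, every Gram matrix $O(o)^{T}O(o)$ is diagonal, so the singular values are read off without computation and the counterexamples are verifiable symbolically rather than numerically. The parametric family in $D$ also yields the limiting gaps $\omega(A\cup\{x\})-\omega(A)\to -1/\sqrt{1+u_{o,\max}^{2}}$ versus $\omega(B\cup\{x\})-\omega(B)\to 1$, which makes the violation of submodularity hold for an entire range of configurations rather than a single point. Finally, your ``rescue'' interpretation (adding $s_3$ boosts the deficient eigendirection of $B$ but only inflates $\sigma_{\max}$ for the already-balanced $A$) explains \emph{why} the inverse condition number fails submodularity, something the paper's proof does not attempt. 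The trade-off is that the paper's version is shorter and requires no asymptotic argument; yours gives more insight at the cost of a few extra lines.
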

\begin{proof}
We prove the claim by giving two counter-examples. 

\noindent\textbf{Case 1}: Given the sensors $s_1(0,0)$, $s_2(2\sqrt{3},-9)$, $s_3(\sqrt{3},3)$ and target $t_1(\sqrt{3},1)$ with $u_{o_{1},max}=1$ in 2-D plane (Figure~\ref{fig:position_cond_test_nonsubmodular}-(a)), $\omega(\{s_1,s_3\})=0.5345>\omega(\{s_1,s_2, s_3\})=0.1823$, which shows $\omega(\cdot)$ is not monotone increasing. 

\noindent\textbf{Case 2}: Given the sensors $s_1(0,0)$, $s_2(2\sqrt{3},0)$, $s_3(\sqrt{3},0.1)$, $s_4(\sqrt{3},3)$ and target $t_1(\sqrt{3},1)$ with $u_{o_{1},max}=1$ in 2-D plane (Figure~\ref{fig:position_cond_test_nonsubmodular}-(b)), $\omega(\{s_1,s_2,s_3\})-\omega(\{s_1,s_2\})=0.3310-0.5345=-0.2035<\omega(\{s_1,s_2,s_4,s_3\})-\omega(\{s_1,s_2, s_4\})=0.8765-0.9258=-0.0493$, which shows $\omega(\cdot)$ is not submodular. 
\end{proof}
\begin{figure}[htb]
\centering{
\subfigure[\textbf{Case 1}]{
\includegraphics[width=0.8\columnwidth]{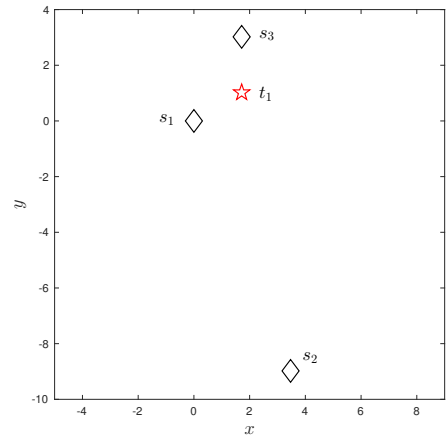}}
\subfigure[\textbf{Case 2}]{
\includegraphics[width=0.8\columnwidth]{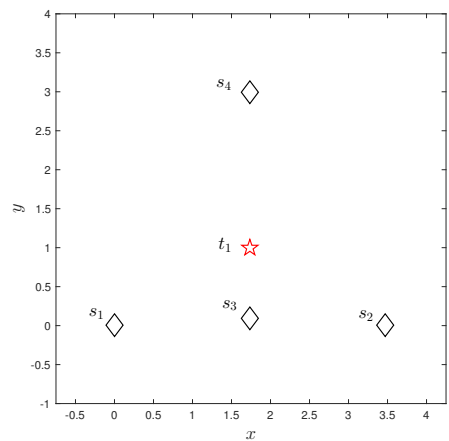}}
}
\caption{Positions of sensors and target in 2D plane.\label{fig:position_cond_test_nonsubmodular}}
\end{figure}

Therefore, we focus on other measures of observability and summarize the results in Theorem~\ref{thm:submodular_observability_measure}.
\begin{thm}
For \emph{symmetric observability matrix},  $\mathbb{O}(o,u_o):=O^{T}(o,u_o)O(o,u_o)$, its trace, log determinant, rank  and the trace of its inverse are submodular and monotone increasing.
\label{thm:submodular_observability_measure}
\end{thm}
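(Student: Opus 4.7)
The key structural observation is that, by Equation~\ref{eqn:new_multi_sensor_target}, the symmetric observability matrix decomposes as
\[
\mathbb{O}(S) = \sum_{i\in S} v_i v_i^{T} + u u^{T}, \qquad v_i := \begin{bmatrix} o_x-p_{ix}\\ o_y-p_{iy}\end{bmatrix},\quad u := \begin{bmatrix} u_{ox}\\ u_{oy}\end{bmatrix},
\]
so assigning an additional sensor to the target amounts to adding a rank-one PSD term $v_i v_i^{T}$ to the fixed baseline $uu^{T}$. This places the four measures squarely in the realm of classical subset-selection results from optimal experiment design, and my plan is to borrow the standard toolbox from there.

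Monotonicity comes for free from the Löwner order. Since $\mathbb{O}(S\cup\{i\})-\mathbb{O}(S)=v_i v_i^{T}\succeq 0$, we have $\mathbb{O}(S)\preceq\mathbb{O}(T)$ for $S\subseteq T$; trace, $\log\det$, and rank are monotone on the PSD cone, which settles three of the four claims. For the trace of the inverse, the Löwner order reverses under inversion, so the natural monotone-increasing quantity is $-\operatorname{tr}(\mathbb{O}^{-1})$ (the $A$-optimality utility of the estimator), and I would read the theorem in that sign convention throughout.

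For submodularity I would handle the four measures in turn. The trace is in fact modular: $\operatorname{tr}(\mathbb{O}(S\cup\{i\}))-\operatorname{tr}(\mathbb{O}(S))=\|v_i\|^{2}$, independent of $S$. For rank, $\operatorname{rank}(\mathbb{O}(S))=\dim\operatorname{span}\bigl(\{v_i\}_{i\in S}\cup\{u\}\bigr)$ is a linear-matroid rank function (with $u$ as a fixed ``always-present'' element) and is therefore submodular. For $\log\det$, the matrix-determinant lemma gives the marginal as $\log\bigl(1+v_i^{T}\mathbb{O}(S)^{-1}v_i\bigr)$, and the antitonicity of inversion under Löwner order, $\mathbb{O}(T)^{-1}\preceq\mathbb{O}(S)^{-1}$, shrinks this marginal as the base set grows, which is exactly submodularity. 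For $-\operatorname{tr}(\mathbb{O}^{-1})$, Sherman--Morrison gives a marginal of $-v_i^{T}\mathbb{O}(S)^{-2}v_i\big/\bigl(1+v_i^{T}\mathbb{O}(S)^{-1}v_i\bigr)$; since $\mathbb{O}$ is only $2\times 2$, I expect the diminishing-returns inequality to reduce to a direct algebraic check using $\operatorname{tr}(\mathbb{O}^{-1})=\operatorname{tr}(\mathbb{O})/\det(\mathbb{O})$ together with the Cauchy--Binet expansion of $\det(\mathbb{O}(S))$ as a sum of squared $2\times 2$ minors over pairs in $S\cup\{u\}$.

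The main obstacle, and the step I expect to fight with, is precisely the trace-of-inverse case: in general dimensions the map $A\mapsto v^{T}A^{-2}v$ is not Löwner-antitone, so the clean Sherman--Morrison comparison that dispatches $\log\det$ does not transfer automatically. Exploiting the fact that $\mathbb{O}$ is $2\times 2$ should close the argument, but the bookkeeping is noticeably less clean than the other three cases. A secondary subtlety is that $\mathbb{O}(\emptyset)=uu^{T}$ is singular, so $\log\det$ and $\operatorname{tr}(\mathbb{O}^{-1})$ are ill-defined on very small subsets; I would handle this either by restricting attention to subsets for which $\{v_i\}_{i\in S}\cup\{u\}$ spans $\mathbb{R}^{2}$, or via an $\epsilon I$ regularization followed by the limit $\epsilon\downarrow 0$, both of which reduce the claim to the nonsingular regime treated above.
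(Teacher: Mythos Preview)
The paper does not actually prove this theorem: its entire argument is the one-sentence remark that ``the proof is similar to proving that the trace of the Gramian and inverse Gramian, the log determinant, and the rank of the Gramian are monotone submodular,'' with a citation to Summers et al. Your proposal reconstructs the standard machinery behind that citation, specialized to the rank-one decomposition $\mathbb{O}(S)=\sum_{i\in S}v_iv_i^{T}+uu^{T}$ that you correctly read off from Equation~\ref{eqn:new_multi_sensor_target}. So in spirit you are taking exactly the route the paper points to---modularity of trace, matroid rank for rank, matrix-determinant lemma plus L\"owner antitonicity for $\log\det$---only with the details spelled out rather than outsourced.

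You are also more careful than the paper on two points it glosses over. First, the sign convention: $\operatorname{tr}(\mathbb{O}^{-1})$ is monotone \emph{decreasing} in the L\"owner order, so the statement is only sensible for $-\operatorname{tr}(\mathbb{O}^{-1})$, which you flag explicitly. Second, your instinct that the trace-of-inverse case is the fragile one is correct: in general dimensions the $A$-optimality utility $-\operatorname{tr}(W^{-1})$ is known \emph{not} to be submodular under rank-one PSD updates, so the claim either relies tacitly on the $2\times2$ structure you propose to exploit or is simply stated too strongly. The paper's one-line deferral does not engage with this at all, whereas you at least isolate where the real work (and the potential obstruction) lies.
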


The proof is similar to proving that the trace of the Gramian and inverse Gramian, the log determinant, and the rank of the Gramian are monotone submodular\cite{summers2016submodularity}. Resorting to submodular and monotone observability measure, we can use a simpler greedy algorithm to solve the General Assignment Problem. 
\section{Simulations}\label{sec:simulation}
We illustrate the performance of the pairing and assignment strategies for sensor selection using  observability measure as the performance criterion. We first consider the two sensors case for tracking a moving target, and then focus on the multi-sensor multi-target assignment problems. The video\footnote{\url{https://youtu.be/Kt0yeYXyrKQ}} and the code\footnote{\url{https://github.com/lovetuliper/observability-based-sensor-assignment.git}} of our simulations are available online.

\subsection{Two Sensors Case} \label{subsec:pairing_single_target}

\begin{figure*}[htb]
\centering{
\subfigure[$k=1$]{
\includegraphics[width=0.66\columnwidth]{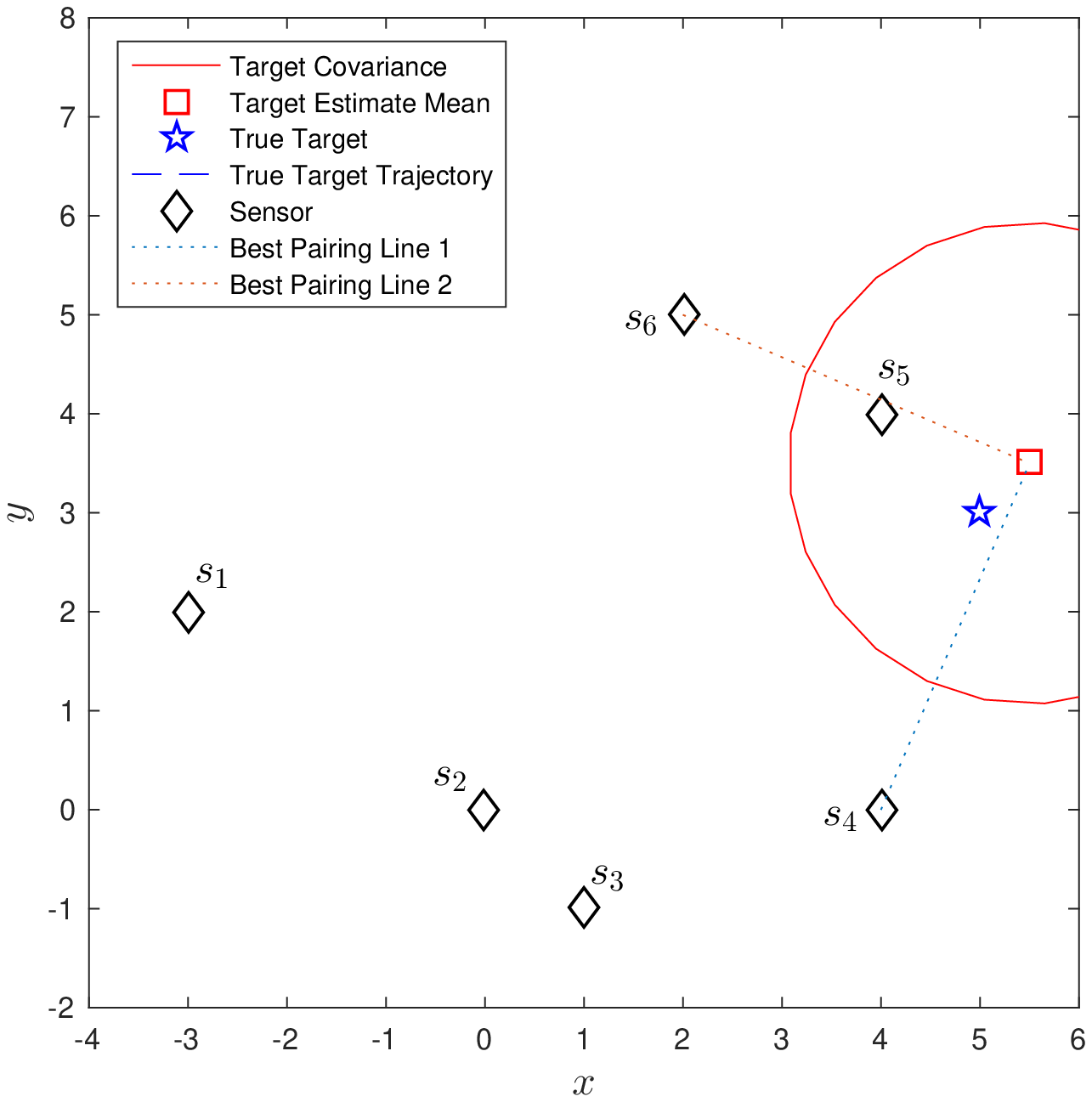}}
\subfigure[$k=50$]{
\includegraphics[width=0.66\columnwidth]{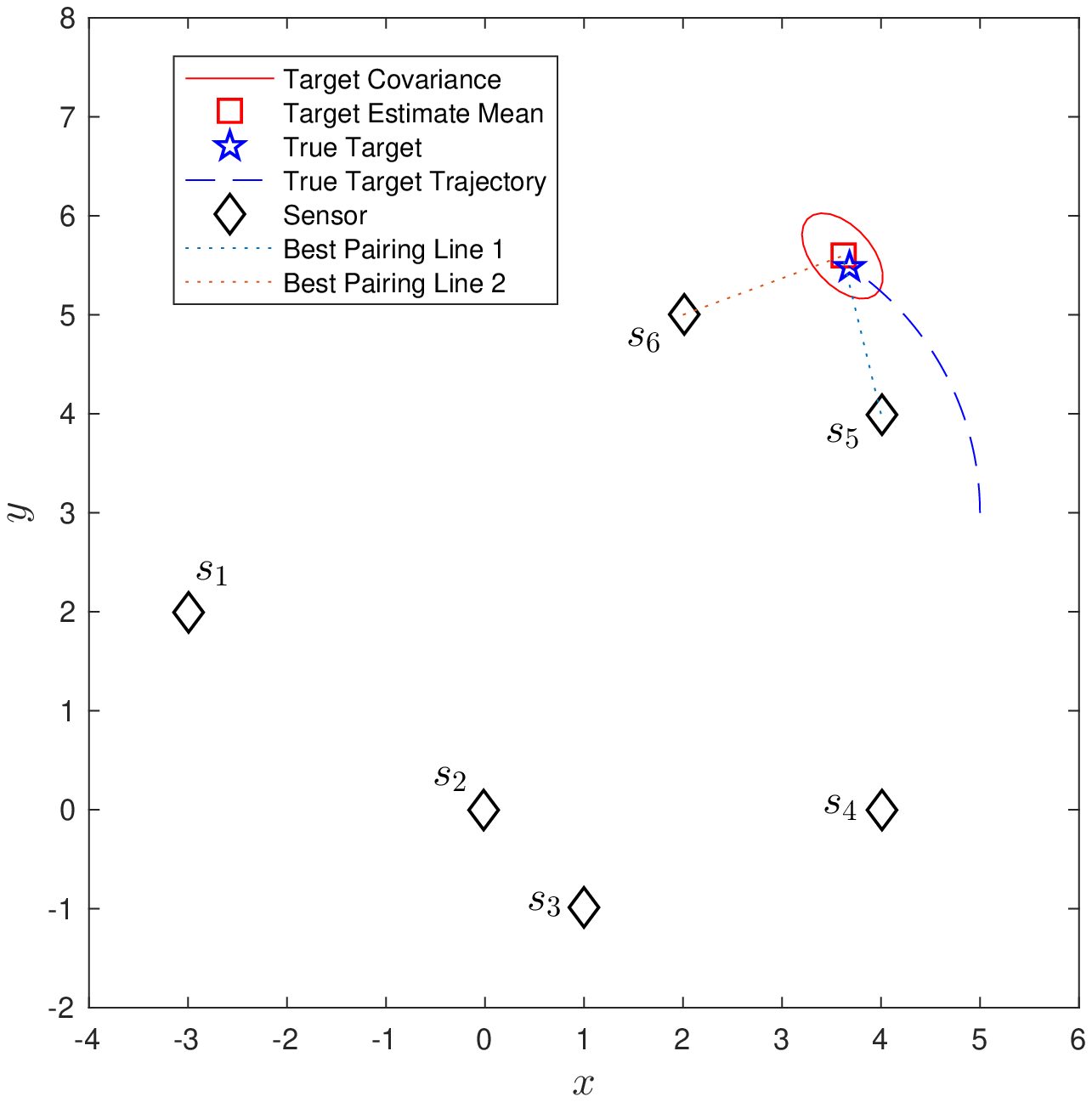}}
\subfigure[$k=150$]{
\includegraphics[width=0.66\columnwidth]{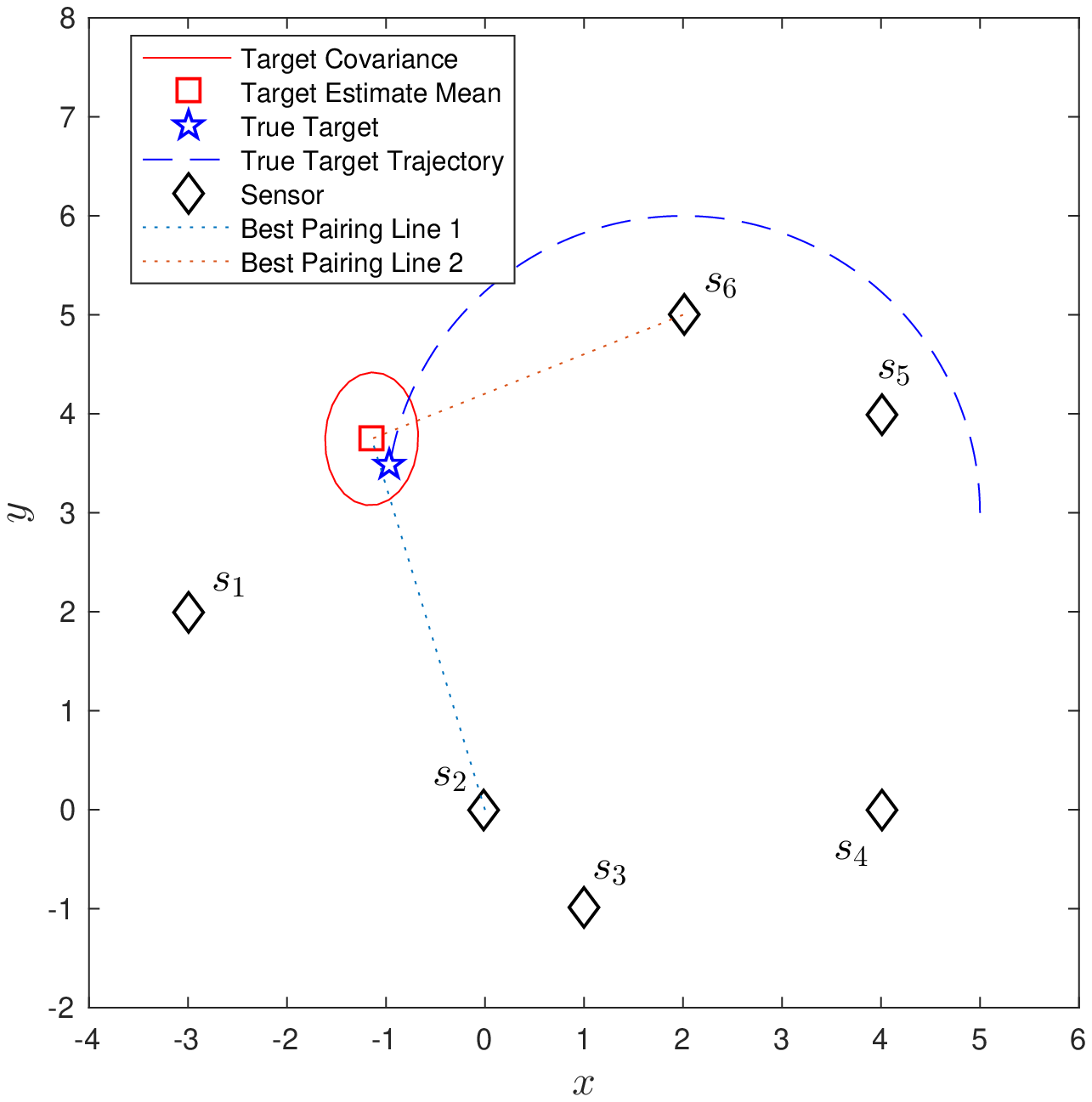}}
\subfigure[$k=200$]{
\includegraphics[width=0.66\columnwidth]{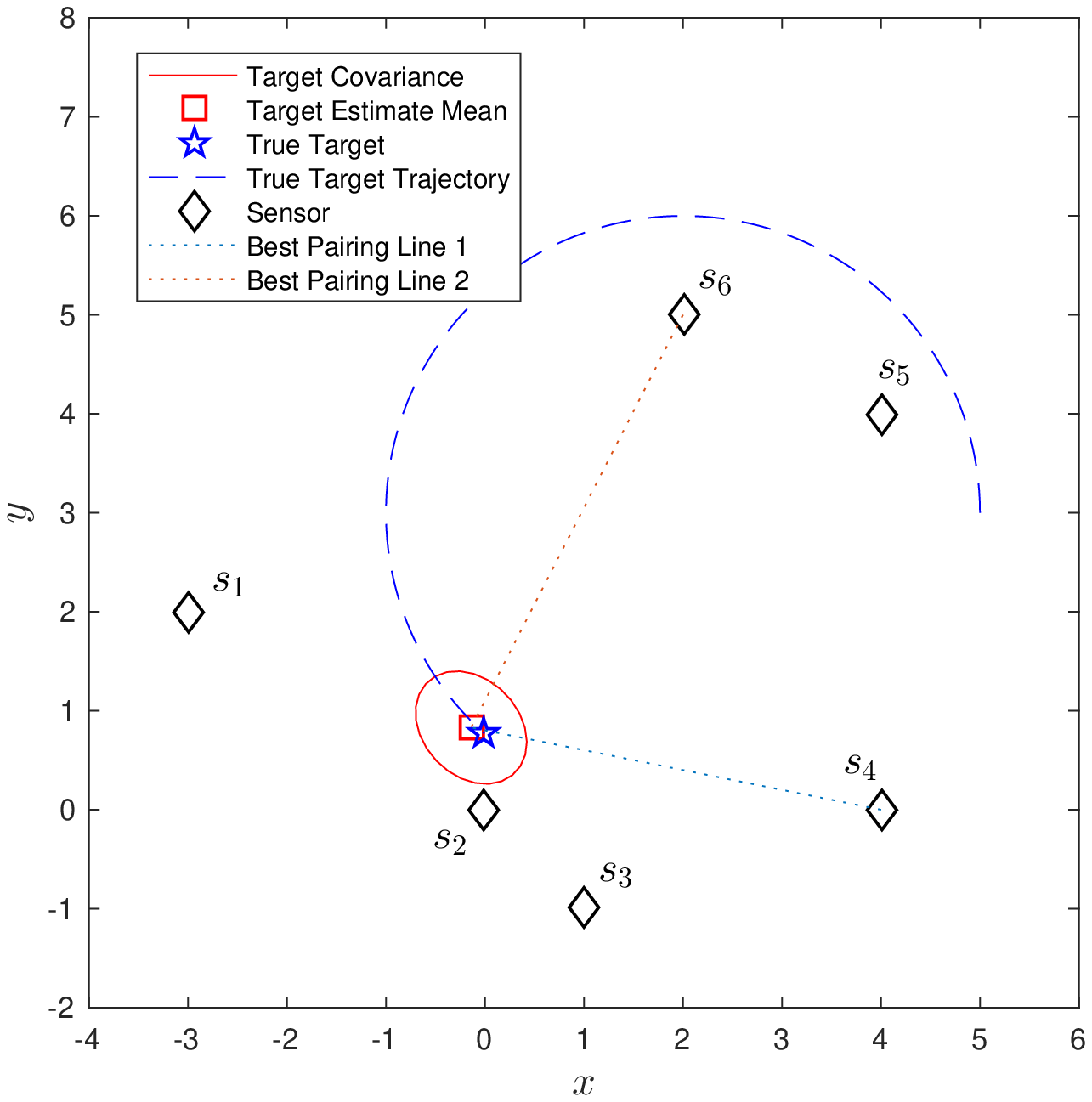}}
\subfigure[$k=250$]{
\includegraphics[width=0.66\columnwidth]{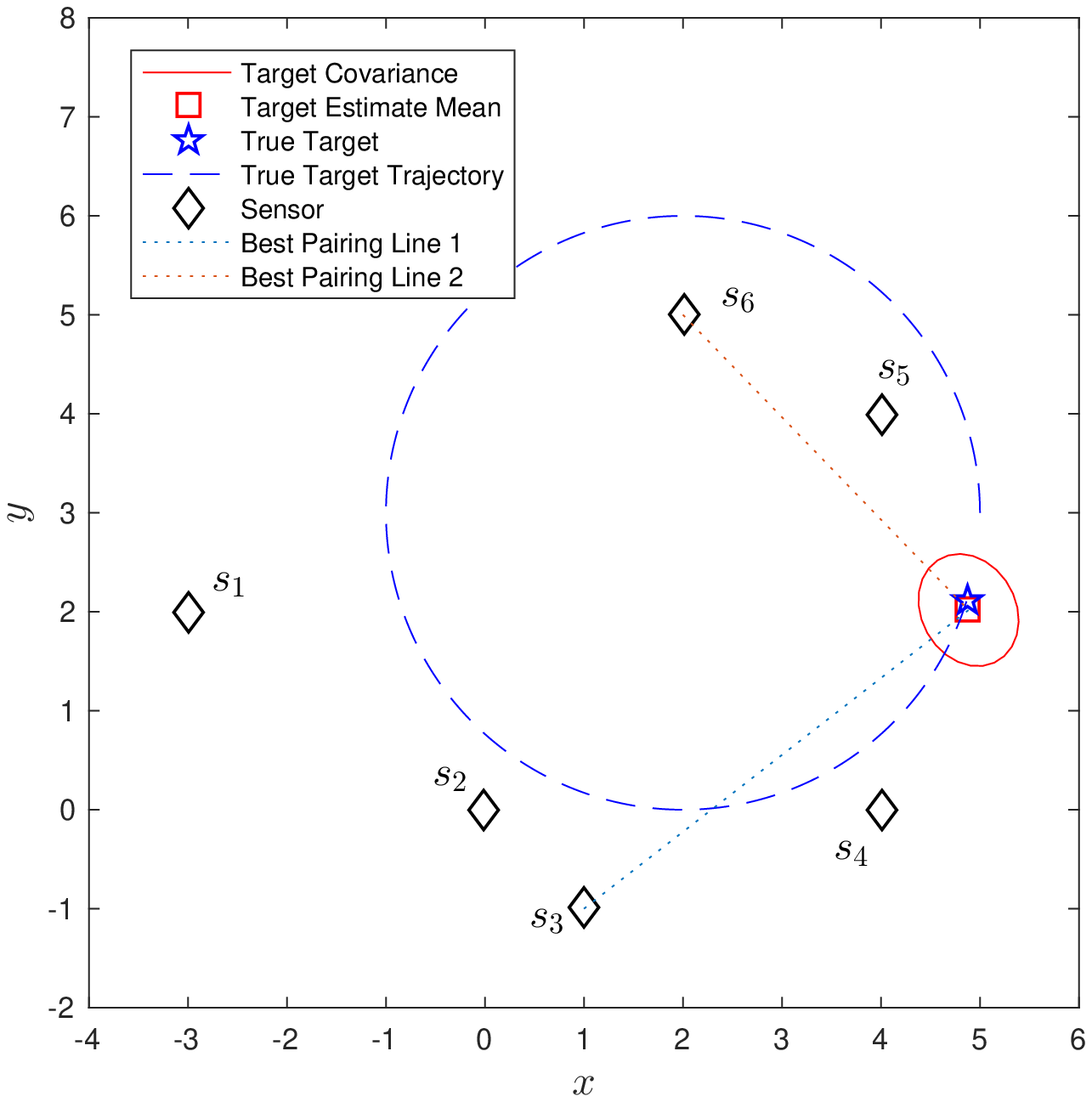}}
\subfigure[$k=315$]{
\includegraphics[width=0.66\columnwidth]{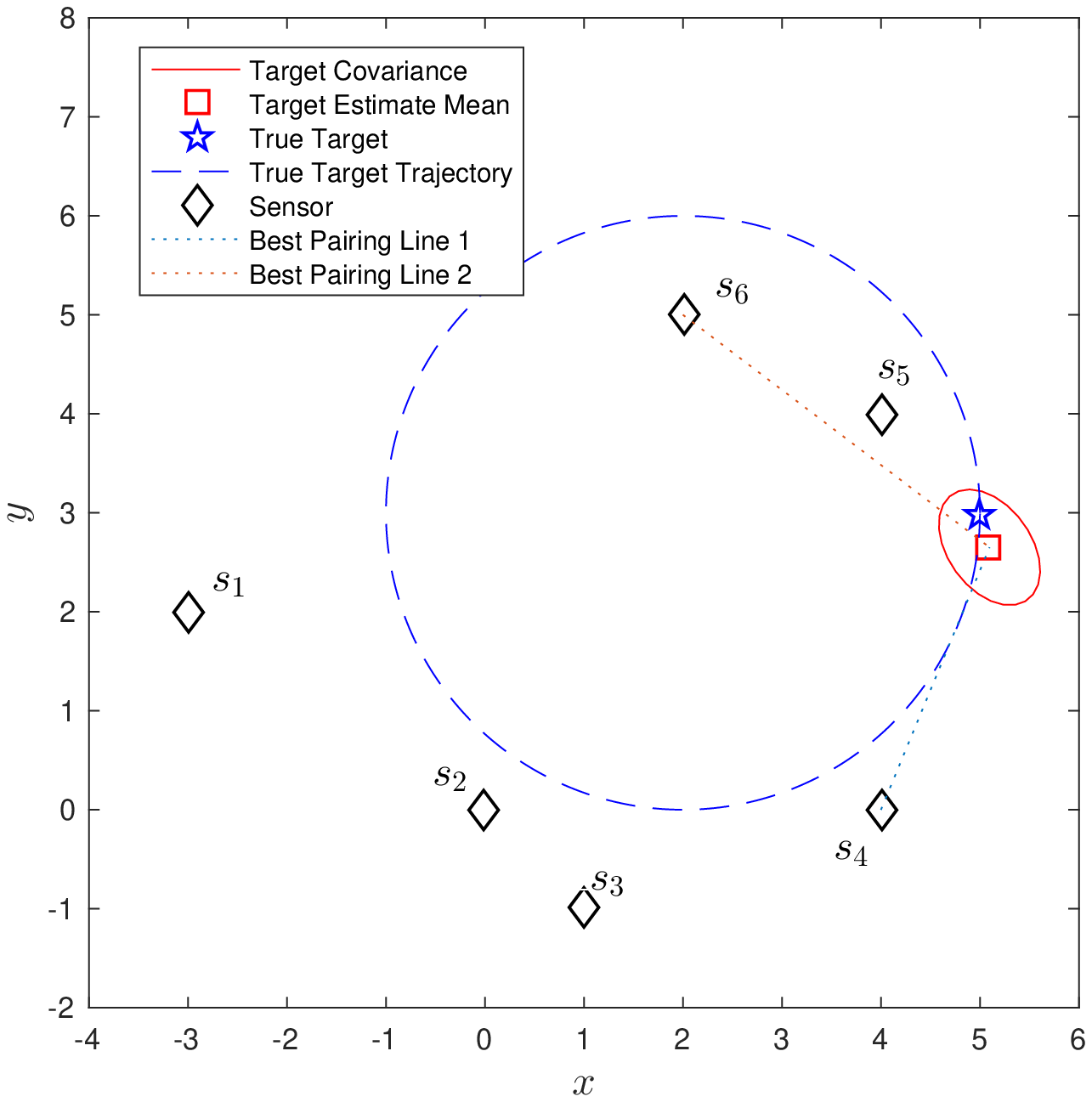}}
}
\caption{Strategy (1) in action for tracking the target with a circular motion.\label{fig:flexible_best_pairing}}
\end{figure*}

\begin{figure}[htb]
\centering{
\subfigure[]{
\includegraphics[width=0.47\columnwidth]{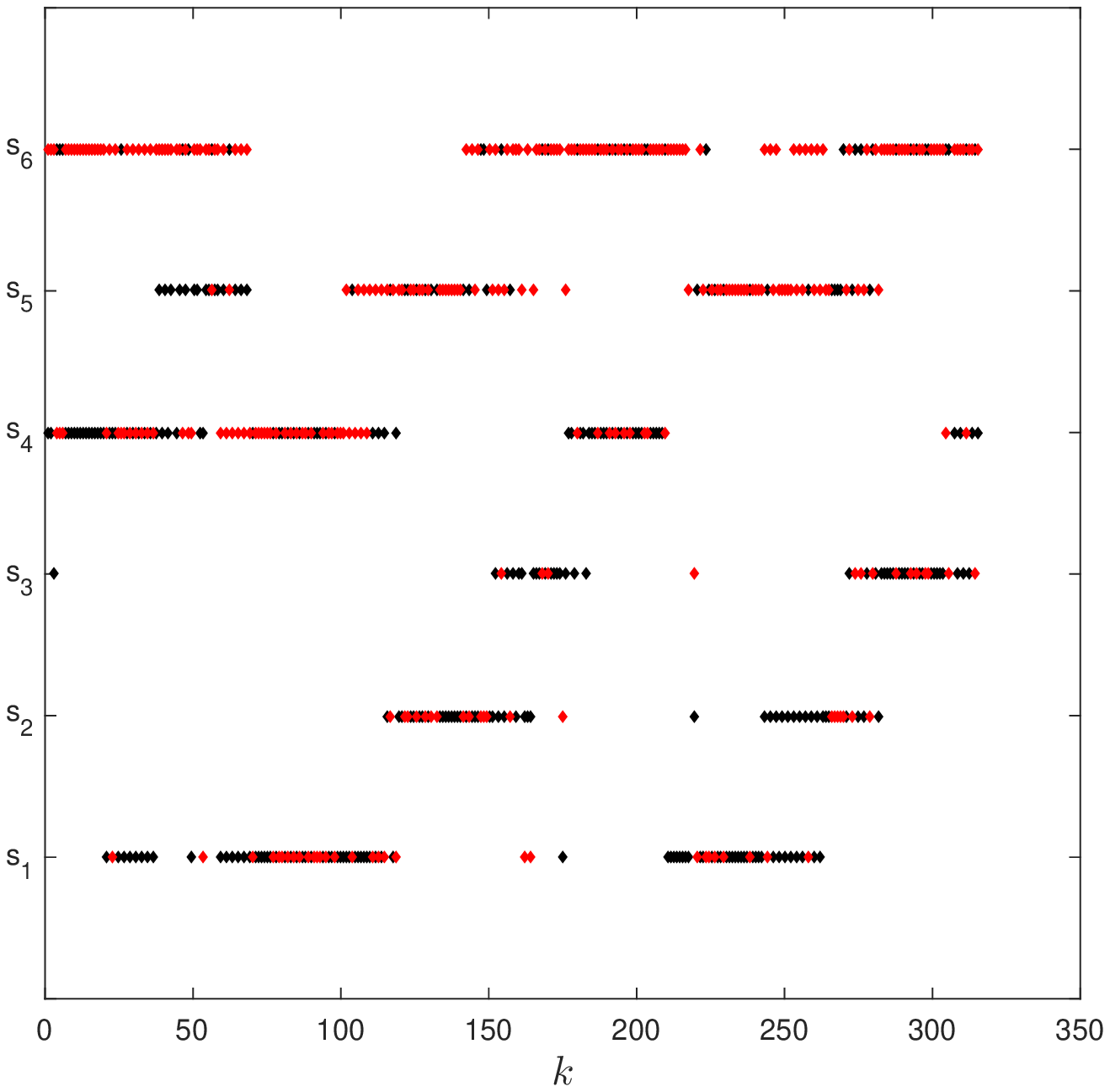}}
\subfigure[]{
\includegraphics[width=0.47\columnwidth]{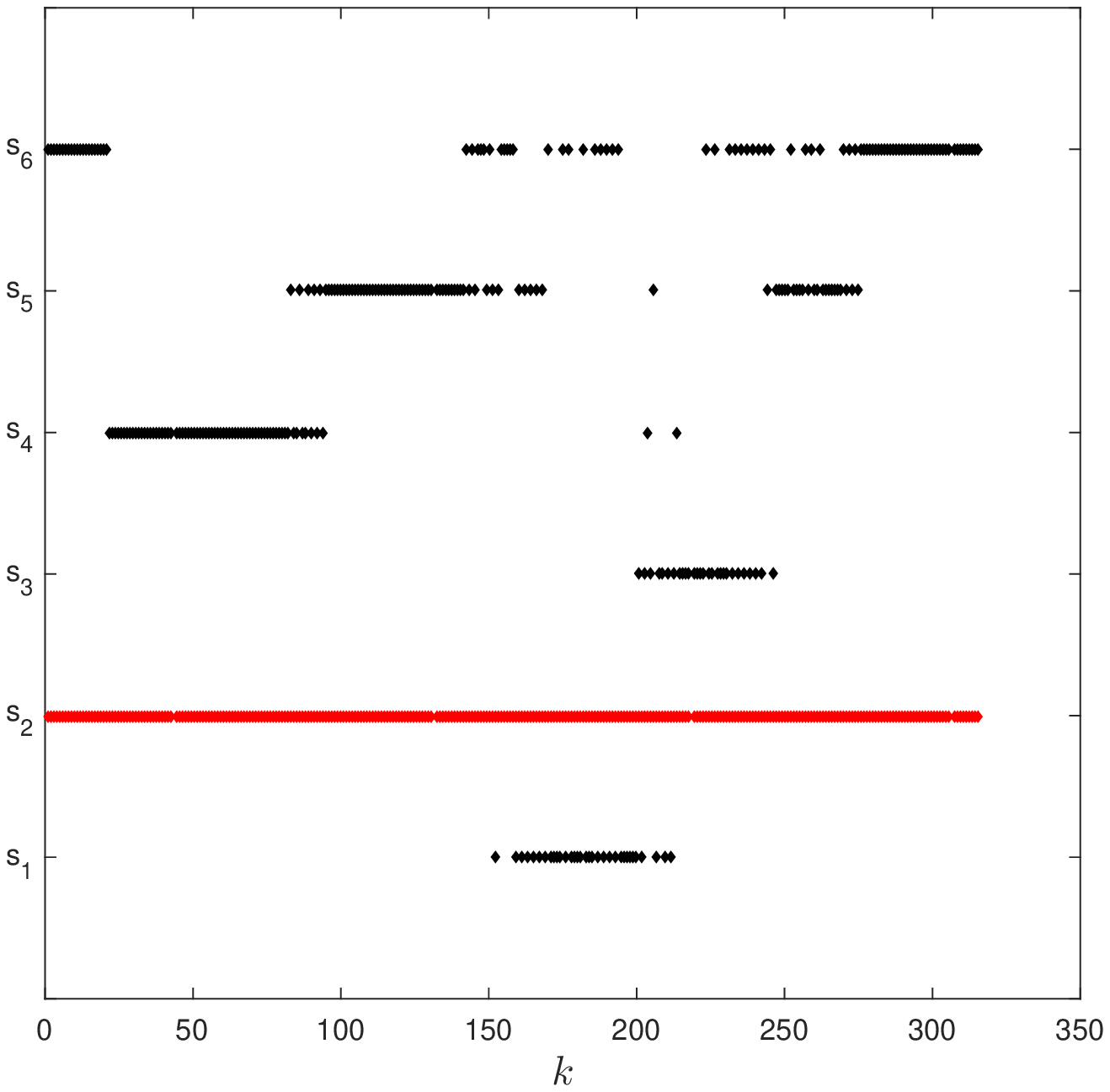}}
}
\caption{(a) Best pair selection at each timestep. (b) Best partner selection for sensor $s_2$ at each timestep.\label{fig:best_fixed_selection}}
\end{figure}

We first consider the scenario where the target is moving independently of six stationary sensors ($s_1$--$s_6$). The target follows a circular trajectory with $u_{o,\max}=3$. We compare three strategies: (1) flexible best pair: we pair sensors $(s_i,s_j)$ which have the maximum $\underline{C}^{-1}(O_{i,j}(\hat{o}(k-1),u_{o,\max}))$ for tracking the moving target $o$ (Figure~\ref{fig:best_fixed_selection}--(a));  (2) flexible pair for fixed sensor: we pair an arbitrarily chosen sensor, $s_2$, with the best sensor at each timestep (Figure \ref{fig:best_fixed_selection}--(b)); (3) best-fixed pair: we pair $s_2$ with $s_1$ at all timesteps. We chose $s_1$ since it gives the best performance amongst all other sensors, on an average, for the circular trajectory.

Figure~\ref{fig:flexible_best_pairing}--(a) to (f) shows the result of running strategy (1) over $315$ timesteps. The best selected pair at each timestep $k$ is shown in Figure \ref{fig:best_fixed_selection}-(a). 

We show the results for the three strategies in Figure~\ref{fig:compare_pair_threecase}. We plot the inverse of the condition number, the estimation error $\epsilon:=\|\hat{o}-o\|_{2}$, and the trace of covariance matrix over time. We observe that the first strategy maintains a higher $\underline{C}^{-1}(O_{i,j}(\hat{o},u_{o,\max}))$ and has lower $\epsilon$ and trace of covariance almost all times. Around $k=50$, the best pair changes frequently as seen in Figure \ref{fig:best_fixed_selection}-(a).

\begin{figure*}[htb]
\centering{
\subfigure[flexible best pairing]{\includegraphics[width=0.67\columnwidth]{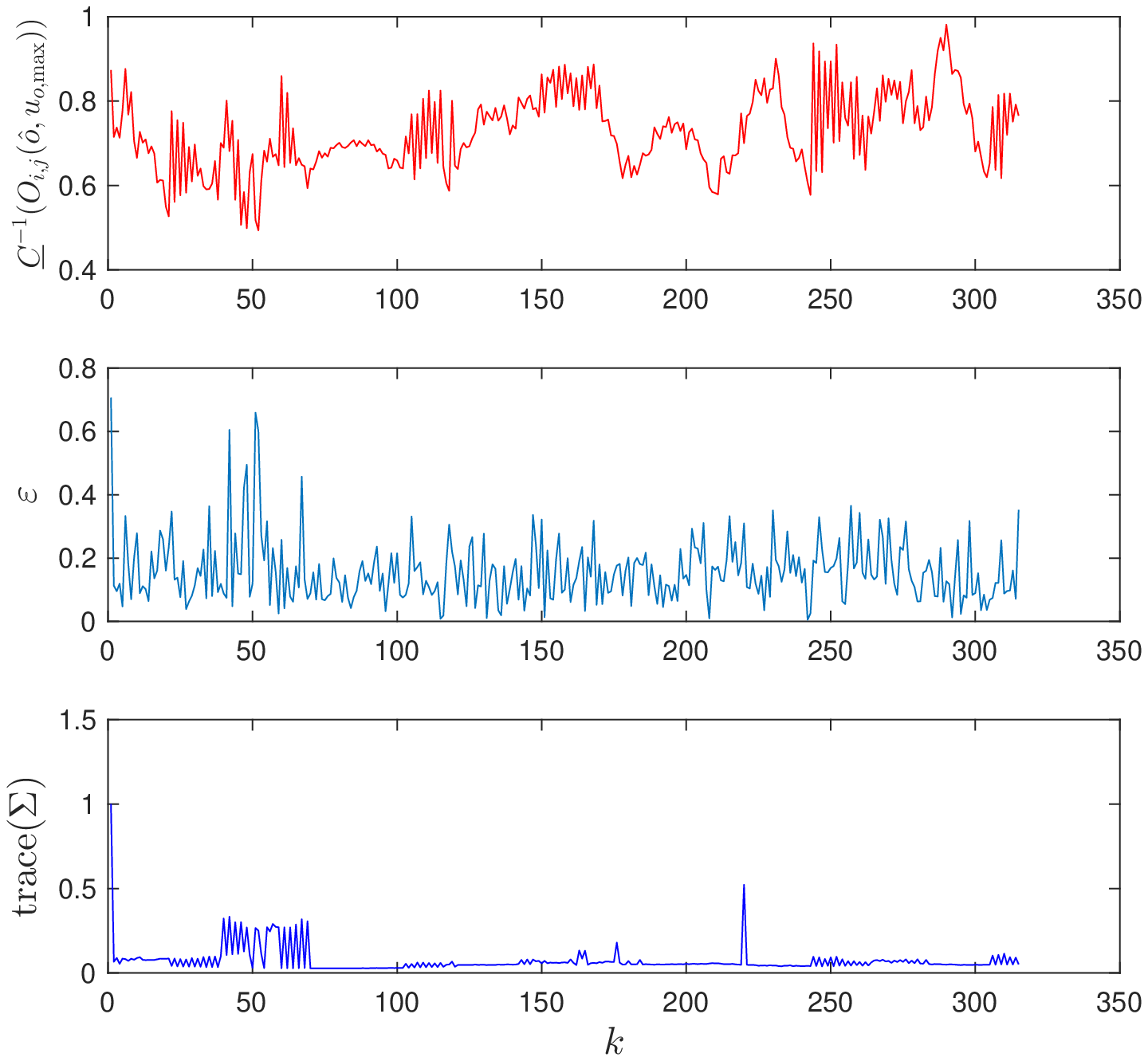}}
\subfigure[best pairing for a fixed sensor $s_2$]{\includegraphics[width=0.67\columnwidth]{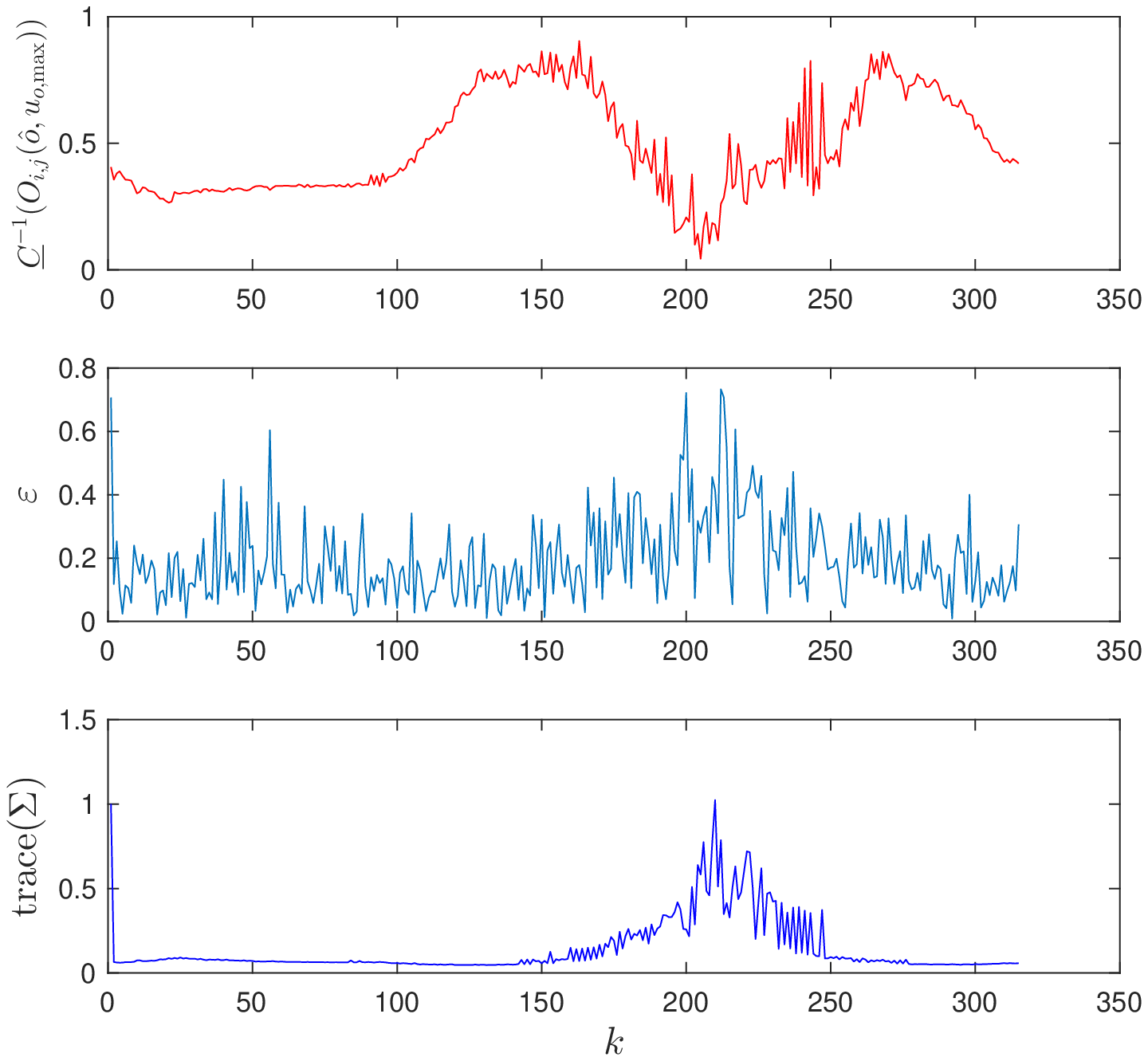}}
\subfigure[best fixed partner, $s_1$, for $s_2$]{\includegraphics[width=0.67\columnwidth]{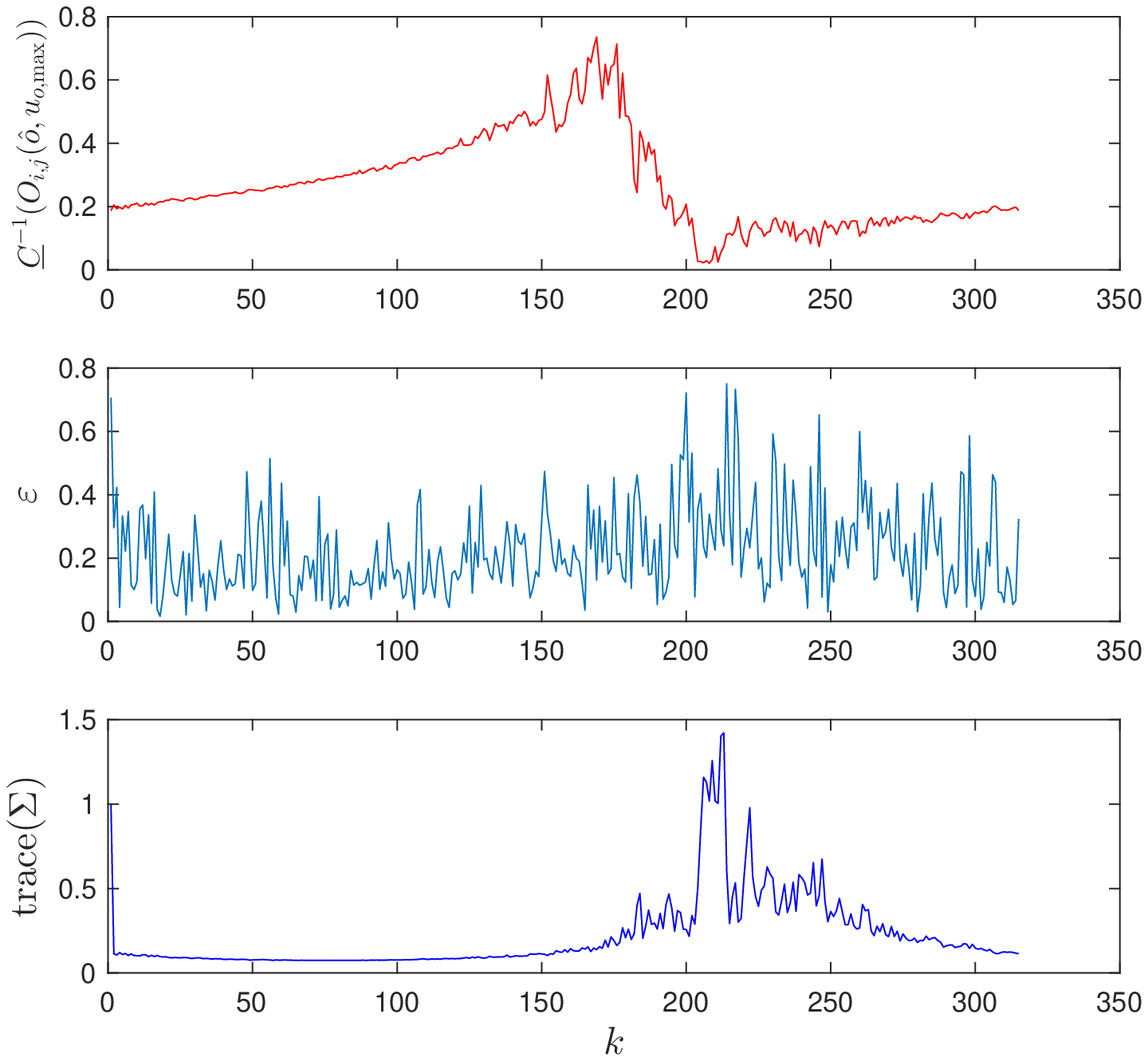}}
}
\caption{Comparison of lower bound of the inverse of condition number,  estimate error, and the trace of covariance matrix in three scenarios.\label{fig:compare_pair_threecase}} 
\end{figure*}

Note that in all three cases, higher $\underline{C}^{-1}(O_{i,j}(\hat{o},u_{o,\max}))$ correlates with lower $\epsilon$ and $\mathrm{trace}(\Sigma)$ suggesting that the tracking performance can be improved by improving the lower bound of the inverse of the condition number. 

\subsection{Adversarial Target} \label{sec:game}
We also simulate the scenario where the target moves in an adversarial fashion. At each time step, the target moves in a direction so as to minimize the inverse of the condition number. Here, the target has an advantage in that it knows its own exact state and control input, while sensors only have an estimate. At each timestep, the target evaluates all the possible control inputs within a ball of radius $u_{o,\max}$, around itself and chooses on that minimizes ${C}^{-1}(O_{i,j}(o,u_{o}))$.


Figure \ref{fig:target_entrap} shows an instance where the target is ``trapped'' by the sensors. The target fails to reduce the inverse of the condition number since the sensor pair switches whenever the target gets closer to a sensor.

\begin{figure}[htb]
\centering{
\subfigure[]{
\includegraphics[width=0.9\columnwidth]{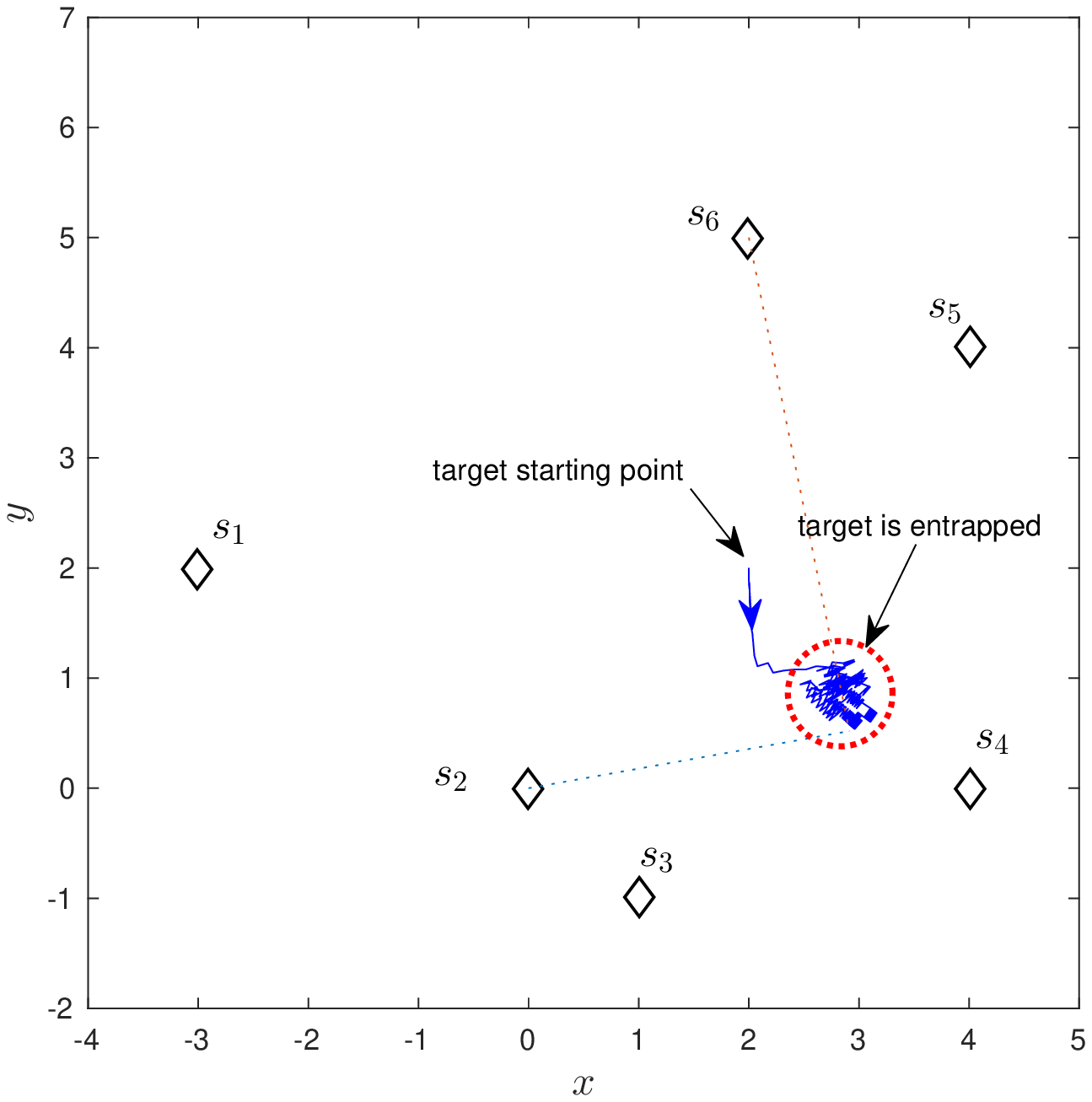}}
\subfigure[]{
\includegraphics[width=0.9\columnwidth]{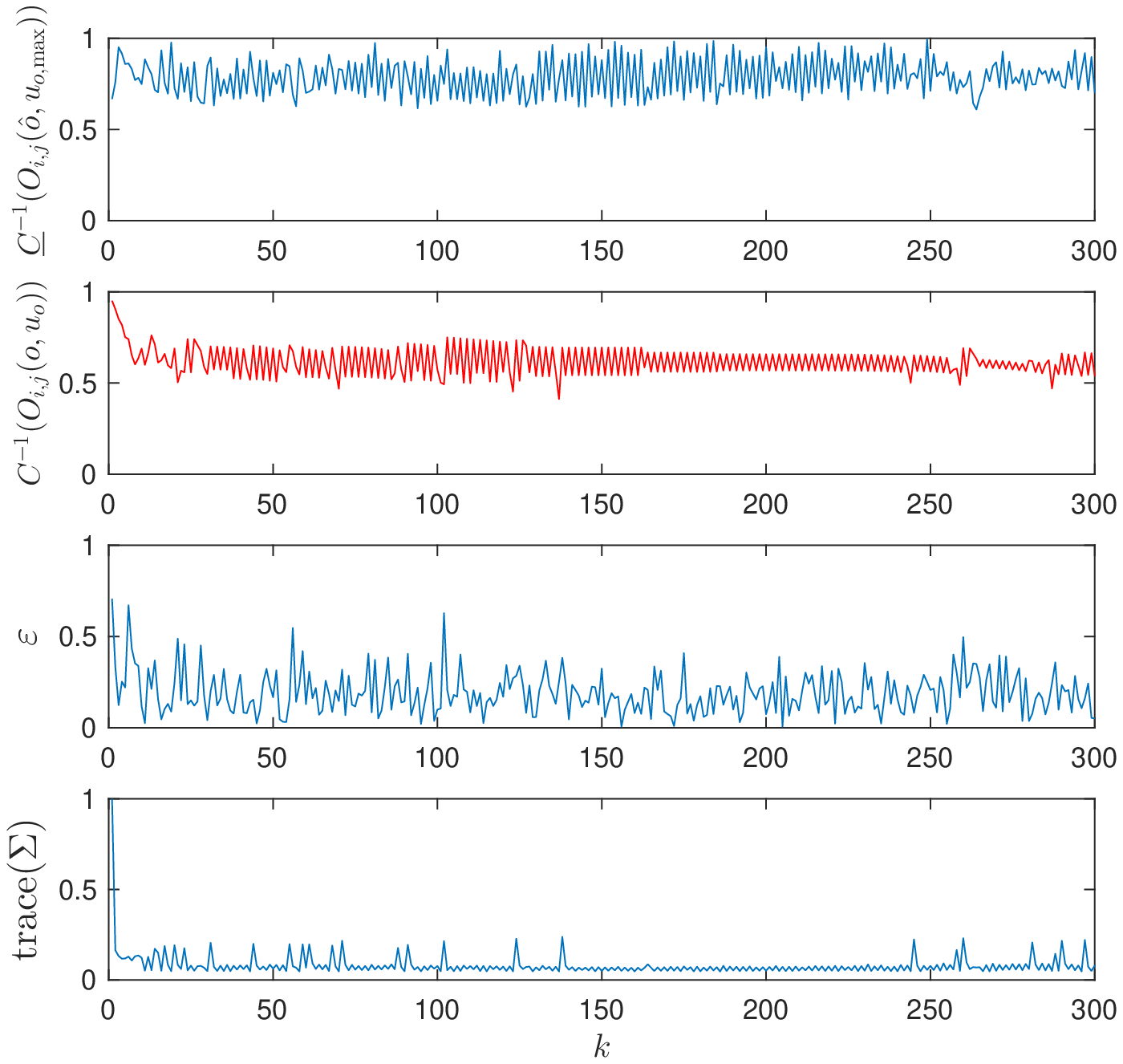}}
}
\caption{An adversarial target is ``entrapped'' by sensors using best pairing strategy. The inverse of the condition number remains at a high value. Even if the target moves closer to a sensor to reduce the inverse of the condition number, the sensor pair switches thereby increasing the inverse of the condition number.\label{fig:target_entrap}}
\end{figure}

\subsection{Greedy Unique Pair Assignment}
The Unique Pair Assignment problem is NP-Complete. Therefore, finding $\omega(\textrm{OPT})$ is infeasible in polynomial time. In order to empirically evaluate the Greedy Unique Pair Assignment (Algorithm~\ref{algorithm:unique_pair_assignment}), we resort to a new assignment scenario where each sensor can be matched in multiple different pairs and each sensor pair can be assigned to at most one specific target. We formulate the new assignment as Relaxed Pair Assignment (Problem \ref{prob:perfect}). It is clear that solving Relaxed Pair Assignment problem optimally gives us an upper bound of optimality for Unique Pair Assignment problem. We can use this upper bound for the comparison of the greedy approach in Unique Pair Assignment.
\begin{problem}[Relaxed Pair Assignment] Given a set of sensor positions, $\mathcal{S} := \{s_0,\ldots,s_N\}$ and a set of target estimates at time $t$, $\mathcal{T} := \{t_0,\ldots,t_L\}$, find an assignment of unique pairs of sensors to targets:
\begin{equation}
\text{maximize} \sum_{l=1}^L \omega(\sigma_1(l),\sigma_2(l), t_l)
\end{equation}
with the added constraint that all pairs are unique, that is, $\forall k, l= 1,\ldots, L$, $k\neq l$, $\sigma_1(k) \neq \sigma_1(l)$ and/or $\sigma_2(k) \neq \sigma_2(l)$. 
\label{prob:perfect}
\end{problem}

The Relaxed Pair Assignment problem can be solved optimally by using maximum weight perfect bipartite matching (MWPBM)~\cite{cormen2009introduction}. 
Note that a sensor can be matched in multiple distinct pairs and assigned to multiple targets. The MWPBM can be solved using the Hungarian algorithm~\cite{kuhn1955hungarian} in polynomial time. 

In order to see the effectiveness of the greedy algorithm in Unique Pair Assignment, we compare the total value charged by the greedy algorithm, $\omega(\textrm{GREEDY})$, with the total value charged by the MWPBM, $\omega(\textrm{MWPBM})$, in Relaxed Pair Assignment as shown in Figure~\ref{fig:comparsion_perfect_unique}. We consider different number of targets with $L$ for 1 to 20 and set the number of robots, $N=2L$. For each $L\in\{1,2,...,20\}$, the positions of sensors and targets are randomly generated within $[0,100]\times [0,100]\in\mathbb{R}^{2}$ for 30 trials. Set the maximum control input for each target as $u_{o,\max}=1$.  Figure~\ref{fig:comparsion_perfect_unique} shows that $\omega(\textrm{GREEDY})$ is close to $\omega(\textrm{MWPBM})$ and much higher than $\frac{1}{3}\omega(\textrm{MWPBM})$. Thus, even though we give a theoretical $1/3$--approximation for the greedy algorithm, it performs much better in practice.

\begin{figure}[htb]
\centering{
\includegraphics[width=0.9\columnwidth]{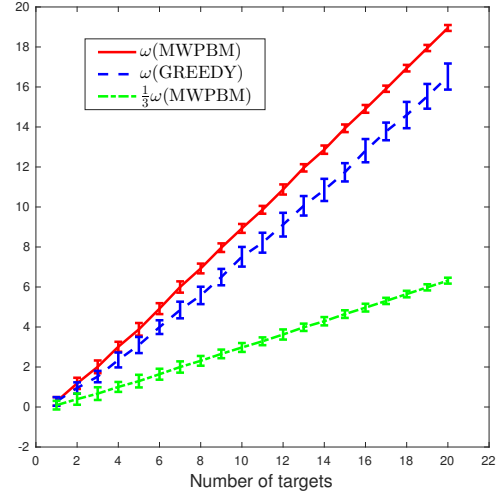}
}
\caption{Comparison of total reward collected by greedy approach (Algorithm~\ref{algorithm:unique_pair_assignment}) with the maximum perfect pair matching.\label{fig:comparsion_perfect_unique}} 
\end{figure}

\subsection{General Pair Assignment}
We also simulate the greedy assignment~\cite{nemhauser1978analysis} for the General Assignment problem by using log determinant of  \emph{symmetric observability matrix},  $\mathbb{O}(o,u_o)$ as observability measure. We set the number of targets as $L=5$ and number of sensors from 20 to 50. For each $N\in\{20,21,...,50\}$, the positions of sensors and targets are randomly generated within $[0,100]\times [0,100]\in\mathbb{R}^{2}$ for 30 trials. 
We compare the number of sensors assigned to one specific target, i.e., $t_l, l\in\{1,2,...L\}$ with the $N/L$ as shown in Figure~\ref{fig:trial_submodular_obs.eps}. It shows that the sensors are assigned to each target almost evenly.  

\begin{figure}[htb]
\centering{
\includegraphics[width=0.9\columnwidth]{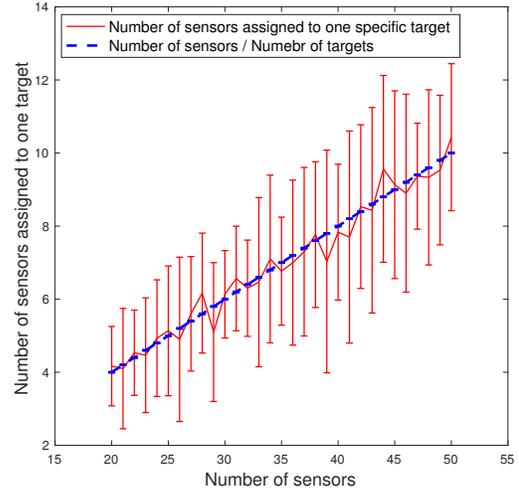}
}
\caption{Number of sensors assigned to each target.\label{fig:trial_submodular_obs.eps}} 
\end{figure}

\section{Conclusion}
\label{sec:Conclusion}
In this paper, we solved sensor assignment problems to improve the observability for target tracking. We derived the lower bound on the inverse of the condition number of the observability matrix for a system with a mobile target and $N$ stationary sensors. The lower bound considers only the known part of the observability matrix --- the sensor-target relative position and an upper bound on the target's speed. We showed how this lower bound can be employed for sensor selection. 
We considered two sensor assignment problems for which we presented constant-factor approximation algorithm. Our immediate work is focused on assigning sensors to cover an area instead of tracking a group of targets. Another avenue is designing an efficient set covering strategy based on observability measures which are submodular and monotone. 

\bibliographystyle{IEEEtran}
\bibliography{IEEEabrv,refs}
\appendix
\section*{Proof for Theorem~\ref{thm:lower_bound}}
\begin{proof}
We partition $O(o,u_o)$ into the known and unknown parts as, 
\begin{eqnarray}
O(o,u_o)=\begin{bmatrix}
    O(o) \\ 
     O(u_o)  
\end{bmatrix},
\label{eqn:new_multi_sensor_target_partition}
\end{eqnarray}
where
\begin{equation}
O(o):=\begin{bmatrix}
    o_{x}-p_{1x},o_{y}-p_{1y} \\ 
     o_{x}-p_{2x},o_{y}-p_{2y} \\ 
    \vdots\\
    o_{x}-p_{Nx},o_{y}-p_{Ny}
\end{bmatrix},
\label{eqn: state_contribution_observability}
\end{equation}
and 
\begin{equation}
O(u_o):=\begin{bmatrix}
     u_{ox},u_{oy}   
\end{bmatrix},
\label{eqn: control_contribution_observability}
\end{equation}
indicate the contribution to the observability matrix from sensor-target relative state and target's control input, respectively. 

The singular values of $O(o,u_o)$ can be found as the square-root of the eigenvalues of the \emph{symmetric observability matrix},  $\mathbb{O}(o,u_o)$, given as~\cite{Strang1976Linear},
\begin{eqnarray}
\mathbb{O}(o,u_o) &=&O^{T}(o,u_o)O(o,u_o)\nonumber\\
&=&O^{T}(o)O(o)+O^{T}(u_o)O(u_o).
\label{sy_ob_matrix}
\end{eqnarray}
\begin{equation}
\left\{
                \begin{array}{ll}
                  \sqrt{{\lambda_{\min}(\mathbb{O}(o,u_o))}}={\sigma_{\min}(O(o,u_o))},\\
                 \sqrt{{\lambda_{\max}(\mathbb{O}(o,u_o))}}={\sigma_{\max}(O(o,u_o))}.
                \end{array}
              \right.
\label{eqn:min/max_singualr_observability_metrix}
\end{equation}

We can use \emph{Weyl and dual Weyl} inequalities to bound the singular values. For Hermitian matrices $X$ and $Y$ with $r$ eigenvalues written in increasing order $\lambda_1(X)\leq \lambda_2(X)\leq\ldots\leq\lambda_r(X)$ and $\lambda_1(Y)\leq \lambda_2(Y)\leq\ldots\leq\lambda_r(Y)$, respectively, the \emph{Weyl inequalities}\cite{franklin2012matrix} is given by,
\begin{equation}
\lambda_{i+j-1}(X+Y)\geq \lambda_{i}(X)+\lambda_{j}(Y)
\label{eqn:Weyl_inequalities}
\end{equation}
where $i,j\geq 1$ and $i+j-1\leq r$. Similarly, the \emph{dual Weyl inequalities} is given by
\begin{equation}
\lambda_{i+j-r}(X+Y)\leq \lambda_{i}(X)+\lambda_{j}(Y)
\label{eqn:Weyl_inequalities}
\end{equation}
where $i,j\geq 1$ and $i+j-r\leq r$.

Since $\mathbb{O}(o,u_o)\in \mathbb{R}^{2\times 2}$, $O^{T}(o)O(o)\in \mathbb{R}^{2\times 2}$ and $O^{T}(u_o)O(u_o)\in \mathbb{R}^{2\times 2}$ are symmetric matrices, they are Hermitian with the eigenvalues (in ascending order) as $\lambda_1(\mathbb{O}(o,u_o))\leq \lambda_2(\mathbb{O}(o,u_o))$, $\lambda_1(O^{T}(o)O(o))\leq \lambda_2(O^{T}(o)O(o))$ and $\lambda_1(O^{T}(u_o)O(u_o))\leq \lambda_2(O^{T}(u_o)O(u_o))$. Following the \emph{Weyl and dual Weyl} inequalities, we get
\begin{align}
&\lambda_1(\mathbb{O}(o,u_o)) \geq \lambda_1(O^{T}(o)O(o)) + \lambda_1(O^{T}(u_o)O(u_o)),\nonumber\\
&\lambda_1(\mathbb{O}(o,u_o)) \leq \nonumber\\ 
&\min\left\{
                \begin{array}{ll}
                 \lambda_1(O^{T}(o)O(o)) + \lambda_2(O^{T}(u_o)O(u_o))\\
                  \lambda_2(O^{T}(o)O(o)) + \lambda_1(O^{T}(u_o)O(u_o))
                \end{array},
              \right. \nonumber\\
&\lambda_2(\mathbb{O}(o,u_o)) \leq
\lambda_2(O^{T}(o)O(o)) + \lambda_2(O^{T}(u_o)O(u_o)),\nonumber\\
&\lambda_2(\mathbb{O}(o,u_o)) \geq \nonumber\\
& \max\left\{
                \begin{array}{ll}
                 \lambda_1(O^{T}(o)O(o)) + \lambda_2(O^{T}(u_o)O(u_o))\\
                  \lambda_2(O^{T}(o)O(o)) + \lambda_1(O^{T}(u_o)O(u_o))
                \end{array}.
              \right.
\label{eqn:Weyl_eigen_four}
\end{align}
Thus, 
\begin{align}
&\frac{\lambda_{1}(\mathbb{O}(o,u_o))}{\lambda_{2}(\mathbb{O}(o,u_o))}\geq
\nonumber\\
&\frac{\lambda_1(O^{T}(o)O(o)) + \lambda_1(O^{T}(u_o)O(u_o))}{\lambda_2(O^{T}(o)O(o)) + \lambda_2(O^{T}(u_o)O(u_o))}.
\label{eqn:lower_bound_inequality}
\end{align}
Then, from Equation \ref{eqn:min/max_singualr_observability_metrix} and Equation \ref{eqn:lower_bound_inequality},  the inverse of the condition number of the local nonlinear observability matrix,
\begin{align*}
\label{eqn:lower_bound_inversecond}
&C^{-1}(O(o,u_o))=\sqrt{\frac{\lambda_{1}(\mathbb{O}(o,u_o))}{\lambda_{2}(\mathbb{O}(o,u_o))}}\nonumber\\
&\geq \sqrt{\frac{\lambda_1(O^{T}(o)O(o)) + \lambda_1(O^{T}(u_o)O(u_o))}{\lambda_2(O^{T}(o)O(o)) + \lambda_2(O^{T}(u_o)O(u_o))}}.\nonumber\\
\end{align*}
By calculating the eigenvalues of symmetric matrix of  target's control contribution,
$$O^{T}(u_o)O(u_o)=\begin{bmatrix}
     u_{ox}^{2} & u_{ox}u_{oy}\\
    u_{oy}u_{ox} & u_{oy}^{2}
\end{bmatrix},$$
we get, 
\begin{eqnarray}
&&\lambda_{1}(O^{T}(u_o)O(u_o))=0\nonumber\\
&&\lambda_{2}(O^{T}(u_o)O(u_o))=u_{ox}^{2}+u_{oy}^{2}=u_{o}^{2}.
\label{eqn:max_min_eigenu}
\end{eqnarray}
\noindent Then the lower bound of $C^{-1}(O(o,u_o))$ is calculated as  \begin{eqnarray}
\underline{C}^{-1}(O(o,u_o)) &=&  \sqrt{\frac{\lambda_1(O^{T}(o)O(o))}{\lambda_2(O^{T}(o)O(o)) + u_{o}^{2}}}\nonumber\\
&=& \frac{\sigma_{\min}(O(o))}{\sqrt{\sigma^{2}_{\max}(O(o))+ u_{o}^{2}}}
\label{eqn:inverse_lower_bound_simply}
\end{eqnarray}
Equation~\ref{eqn:inverse_lower_bound_simply} gives the main lower bound. Note that $C^{-1}(O(o,u_o))$ cannot be determined since target's control input, $u_o$, is unknown. However, we know that $||u_o||_2 \leq u_{o,\max}$. Therefore,
\begin{equation}
\underline{C}^{-1}(O(o,u_o)) \geq \frac{\sigma_{\min}(O(o))}{\sqrt{\sigma^{2}_{\max}(O(o))+ u_{o,\max}^{2}}}
\label{eqn:inverse_lower_bound_withumax}
\end{equation}
This yields our main lower bound result.
\label{proof:theorem1}
\end{proof}
\section*{Proof for Theorem~\ref{thm:sensor_n_1}}
\begin{proof}
The local observability matrix for one-sensor-target, $i-o$ system can be derived from Equation \ref{eqn:new_multi_sensor_target} as,
\begin{equation}
O_{i}(o,u_o)=\begin{bmatrix}
    o_{x}-p_{ix} & o_{y}-p_{iy} \\ 
     u_{ox} & u_{oy}   
\end{bmatrix}.
\label{eqn:observability_i_o}
\end{equation}
The sensor-target relative state contribution is.
$$O_{i}(o)=\begin{bmatrix}
    o_{x}-p_{ix} & o_{y}-p_{iy}
\end{bmatrix}.$$
The $i-o$ system is weakly locally observable if $O_{i}(o,u_o)$ has full column rank, i.e., $(o_{x}-p_{ix})u_{oy} \neq (o_{y}-p_{iy})u_{ox}$. However, the sensor does not know the target's control input, $u_o$. 


From the eigenvalues of symmetric matrix of sensor-target relative state contribution of $i-o$ system given by,
$$O_{i}^{T}(o)O_{i}(o)=\begin{bmatrix}
    (o_{x}-p_{ix})^{2},(o_{x}-p_{ix})(o_{y}-p_{iy}) \\ 
     (o_{x}-p_{ix})(o_{y}-p_{iy}), (o_{y}-p_{iy})^{2} 
\end{bmatrix},$$
we get, 
\begin{equation}
\left\{
                \begin{array}{ll}
                  {\sigma_{\min}(O_{i}(o))} =\sqrt{{\lambda_{\min}(O_{i}^{T}(o)O_{i}(o))}}=0,\\
                 {\sigma_{\max}(O_{i}(o))} =\sqrt{{\lambda_{\max}(O_{i}^{T}(o)O_{i}(o))}}\\
~~~~~~~~~~~~~~~=\sqrt{(o_{x}-p_{ix})^{2}+(o_{y}-p_{iy})^{2}}.
                \end{array}
              \right.
\label{eqn:min/max_singualr_observability_metrix_relative_state}
\end{equation}
Thus, from Equation \ref{eqn:inverse_lower_bound_simply}, the lower bound for $C^{-1}(O_{i}(o,u_o))$ is $\frac{\sigma_{\min}(O_{i}(o))}{\sqrt{\sigma^{2}_{\max}(O_{i}(o))+ u_{o}^{2}}}=0$. Consequently, the lower bound cannot be controlled by the sensor. 
\end{proof}
\section*{Proof for Theorem~\ref{thm:sensor_n_gre2}}
\begin{proof} Recall that ${\sigma_{\min}(O(o,u_o))}=\sqrt{{\lambda_{\min}(\mathbb{O}(o,u_o))}}$. We have,
\begin{equation}
\frac{\sigma_{\min}(O(o,u_o))}{\sigma_{\max}(O(o,u_o))}=\frac{\sqrt{\lambda_{\min}(\mathbb{O}(o,u_o))}}{\sqrt{\lambda_{\max}(\mathbb{O}(o,u_o))}}.
\label{eqn:eigensquare_observability_metrix}
\end{equation}
Following Equations~\ref{eqn:lower_bound_inequality} and \ref{eqn:max_min_eigenu}, we have the lower bound for $\frac{\lambda_{\min}(\mathbb{O}(o,u_o))}{\lambda_{\max}(\mathbb{O}(o,u_o))}$, described as
\begin{eqnarray}
\frac{\lambda_{\min}(\mathbb{O}(o,u_o))}{\lambda_{\max}(\mathbb{O}(o,u_o))}\geq \frac{\lambda_{\min}(O^{T}(o)O(o))}{\lambda_{\max}(O^{T}(o)O(o)) + u_{o}^{2}},
\label{eqn:upper_lower_bounds}
\end{eqnarray}
The observability can be improved by increasing this lower bound. We can transform the statement of the theorem (using eigenvalues instead of singular values) as:
\emph{if} 
\begin{equation}
\left\{
                \begin{array}{ll}
                 \frac{\lambda_{\min}^{'}(O^{T}(o)O(o))}{\lambda_{\max}^{'}(O^{T}(o)O(o))}\geq\frac{\lambda_{\min}(O^{T}(o)O(o))}{\lambda_{\max}(O^{T}(o)O(o))},\\
                \lambda_{\min}^{'}(O^{T}(o)O(o))\geq \lambda_{\max}(O^{T}(o)O(o))
                \end{array}
              \right.
\label{eqn:assumption_singular_condition}
\end{equation}
\emph{then} 
\begin{eqnarray}
&&\frac{\lambda_{\min}^{'}(O^{T}(o)O(o))}{\lambda_{\max}^{'}(O^{T}(o)O(o)) + u_{o}^{2}} \nonumber\\
&&\geq \frac{\lambda_{\min}(O^{T}(o)O(o))}{\lambda_{\max}(O^{T}(o)O(o)) + u_{o}^{2}}
\label{eqn:conslusion_singular_condition}
\end{eqnarray}
where the $\lambda$ and $\lambda'$ denotes the eigenvalues before and after the sensors' apply their control, respectively.

We start with the left-hand side of Equation~\ref{eqn:conslusion_singular_condition}) to get,
\begin{eqnarray}
&&\frac{\lambda_{\min,o}^{'}}{\lambda_{\max,o}^{'} + u_{o}^{2}}-\frac{\lambda_{\min,o}}{\lambda_{\max,o} + u_{o}^{2}}\nonumber\\
&=&\frac{\lambda_{\min,o}^{'}\lambda_{\max,o}-\lambda_{\max,o}^{'}\lambda_{\min,o}}{(\lambda_{\max,o}^{'} + u_{o}^{2})(\lambda_{\max,o} + u_{o}^{2})}\nonumber\\
&+&\frac{u_{o}^{2}(\lambda_{\min,o}^{'}-\lambda_{\min,o})}{(\lambda_{\max,o}^{'} + u_{o}^{2})(\lambda_{\max,o} + u_{o}^{2})}
\label{eqn:proof}
\end{eqnarray}
where $\lambda_{\min,o}^{'}$, $\lambda_{\max,o}^{'}$, $\lambda_{\min,o}$ and $\lambda_{\max,o}$ indicate $\lambda_{\min}^{'}(O^{T}(o)O(o))$, $\lambda_{\max}^{'}(O^{T}(o)O(o))$, $\lambda_{\min}(O^{T}(o)O(o))$ and $\lambda_{\max}(O^{T}(o)O(o))$. 

Using Equation~\ref{eqn:assumption_singular_condition}, it is easy to show 
\begin{equation}
\frac{\lambda_{\min,o}^{'}}{\lambda_{\max,o}^{'} + u_{o}^{2}}-\frac{\lambda_{\min,o}}{\lambda_{\max,o} + u_{o}^{2}} \geq 0.
\label{eqn:conclusion_proof}
\end{equation}
Hence the claim is proved.
\end{proof}
\begin{rem}
Equation~\ref{eqn:assumption_singular_condition} is sufficient, but not necessary condition to guarantee Equation~\ref{eqn:conslusion_singular_condition}. This is because Equation~\ref{eqn:conslusion_singular_condition} can be established with a weaker condition, 
\begin{equation*}
\lambda_{\min,o}^{'}\lambda_{\max,o}-\lambda_{\max,o}^{'}\lambda_{\min,o}+u_{o}^{2}(\lambda_{\min,o}^{'}-\lambda_{\min,o})\geq 0.
\label{eqn:weaker_assumption*}
\end{equation*}
We choose the stricter condition (Equation~\ref{eqn:assumption_singular_condition}) because it is conceptually easy to separate and eliminate the influence on the degree of observability from target's control input, $u_o$, which is unknown and uncontrolled.
\label{remark:rem2}
\end{rem}
\section*{Proof for Theorem~\ref{thm:lower_bound_ijo}}
\begin{proof}
The sensor-target relative state contribution of the local observability matrix $O_{i,j}(o,u_o)$ (Equation \ref{eqn:observability_i_j_o}) is,
\begin{equation}
O_{i,j}(o)=\begin{bmatrix}
    o_{x}-p_{ix} & o_{y}-p_{iy} \\ 
    o_{x}-p_{jx} & o_{y}-p_{jy} 
\end{bmatrix}.
\label{eqn:relative_state_i_j_o}
\end{equation}

The lower bound of the inverse of the condition number of $O_{i,j}(o,u_o)$ is $\underline{C}^{-1}(O_{i,j}(o,u_o))$. Our goal is to determine how to improve $\underline{C}^{-1}(O_{i,j}(o,u_o))$ by analyzing the sensor-target relative state contribution. This is similar to the work presented in~\cite{arrichiello2013observability}. 

We start by calculating the eigenvalues of $O_{i,j}^{T}(o)O_{i,j}(o)$.
\begin{equation}
O_{i,j}^{T}(o)O_{i,j}(o)=\begin{bmatrix}
    (o_{x}-p_{ix})^{2}, (o_{x}-p_{jx})(o_{y}-p_{iy}) \\ 
    (o_{y}-p_{iy})(o_{x}-p_{jx}), (o_{y}-p_{jy})^{2} 
\end{bmatrix}.
\label{eqn:symmetric_relative_ijo}
\end{equation}
Using polar coordinates (Equation \ref{eqn:polar_relative_sensor_target}), $O_{i,j}^{T}(o)O_{i,j}(o)$ can be rewritten as,
\begin{eqnarray}
\begin{bmatrix}
d_{io}^{2} \cos^{2}\theta_i$+$d_{jo}^{2} \cos^{2}\theta_j,d_{io}^{2}\sin\theta_i\cos\theta_i$+$d_{jo}^{2}\sin\theta_j\cos\theta_j\\ d_{jo}^{2}\sin\theta_j\cos\theta_j$+$d_{io}^{2}\sin\theta_i\cos\theta_i,d_{io}^{2} \sin^{2}\theta_i$+$d_{jo}^{2} \sin^{2}\theta_j
\end{bmatrix}. \nonumber
\label{eqn:oto_polar}
\end{eqnarray}
The eigenvalues are given by,
\begin{eqnarray}
&&\lambda_{\min}(O_{i,j}^{T}(o)O_{i,j}(o)) =\nonumber\\
&&\frac{d_{io}^{2}+d_{jo}^{2}-\sqrt{d_{io}^{4}+d_{jo}^{4}+2d_{io}^{2}d_{jo}^{2}\cos(2\theta_{ji})}}{2},\nonumber\\
&&\lambda_{\max}(O_{i,j}^{T}(o)O_{i,j}(o))=\nonumber\\
&&\frac{d_{io}^{2}+d_{jo}^{2}+\sqrt{d_{io}^{4}+d_{jo}^{4}+2d_{io}^{2}d_{jo}^{2}\cos(2\theta_{ji})}}{2}.
\label{eqnarray}
\end{eqnarray}
where $\theta_{ji}:=\theta_{j}-\theta_{i}$.
Then, by Equation ~\ref{eqn:inverse_lower_bound_simply} in the proof of Theorem~\ref{thm:lower_bound}, the claim is guaranteed. 
\label{proof:thm4}
\end{proof}
\section*{Proof for Theorem~\ref{thm:$ij_pair_extreme$}}
\begin{proof} First, for a fixed $\alpha$, 
$$\frac{\partial \underline{C}^{-1}(O_{i,j}(o,u_o))}{\partial \theta_{ji}}=0  \Rightarrow  \alpha^{2}\sin(2\theta_{ji})=0,$$
 $$\Rightarrow  \theta_{ji}=0, \pm \frac{\pi}{2}, \pi.
\label{eqn:partial_theta} $$
and 
$$\frac{\partial^{2}\underline{C}^{-1}(O_{i,j}(o,u_o))}{\partial \theta_{ji}^{2}}|_{\theta_{ji}=0,\pi} > 0. $$ 
$$\frac{\partial^{2}\underline{C}^{-1}(O_{i,j}(o,u_o))}{\partial \theta_{ji}^{2}}|_{\theta_{ji}=\pm \frac{\pi}{2}} < 0. $$ 
Thus, when $\alpha$ is fixed, $\underline{C}^{-1}(O_{i,j}(o,u_o))$ reaches its minimum, $\underline{C}^{-1}(O_{i,j}(o,u_o))|_{\theta_{ji}=0,~\pi}=0$, and reaches its maximum $\underline{C}^{-1}(O_{i,j}(o,u_o))|_{\theta_{ji}=\pm \frac{\pi}{2}}=\sqrt{\frac{1+\alpha^{2}-|1-\alpha^{2}|}{1+\alpha^{2}+|1-\alpha^{2}|+2u_{o}^{2}/d_{io}^{2}}}$.
Then, when $\theta_{ji}=\pm \frac{\pi}{2}$, 
$$\frac{\partial \underline{C}^{-1}(O_{i,j}(o,u_o))}{\partial \alpha}|_{\theta_{ji}=\pm \frac{\pi}{2}=0} $$
$$ \Rightarrow  \frac{\alpha(1-\alpha^{2})}{(1+\alpha^{2}+|1-\alpha^{2}|+2u_{o}^{2}/d_{io}^{2})^{2.5}}=0,$$
$$ \Rightarrow  \alpha=0,1, \alpha \to \infty.$$
and 
$$\frac{\partial^{2}\underline{C}^{-1}(O_{i,j}(o,u_o))}{\partial \alpha^{2}}|_{\theta_{ji}=\pm \frac{\pi}{2}, \alpha=0, \alpha \to \infty} > 0. $$ 
$$\frac{\partial^{2}\underline{C}^{-1}(O_{i,j}(o,u_o))}{\partial \alpha^{2}}|_{\theta_{ji}=\pm \frac{\pi}{2}, \alpha=1} < 0. $$ 
 Therefore, $\underline{C}^{-1}(O_{i,j}(o,u_o))|_{\theta_{ji}=\pm \frac{\pi}{2}}$ reaches its minimum $0$, at $\alpha=0$, or $\alpha \to \infty$, and reaches its maximum $\sqrt{\frac{1}{1+u_{o}^{2}/d_{io}^{2}}}$, at $\alpha=1$, as shown in Figure \ref{fig:lower_bound_surf_contour} where, $\underline{C}^{-1}(O_{i,j}(o,u_o))$ reaches its maximum at $\alpha=1$ and ${\theta_{ji}=\pm \frac{\pi}{2}}$. 

Note that, when $\theta_{ji}=0, \pi$,  $\underline{C}^{-1}(O_{i,j}(o,u_o))$ is always null and not influenced by $\alpha$. Besides, for a fixed $\theta_{ji} \neq 0, \pm \frac{\pi}{2}, \pi$, the minimum extreme point w.r.t. $\alpha$ still happens when $\alpha=0$ or $\alpha \to \infty$, but the maximum extreme point is not at $\alpha=1$, which is shown is the following derivations.
$$\frac{\partial \underline{C}^{-1}(O_{i,j}(o,u_o))}{\partial \alpha}|_{\theta_{ji} \neq 0, \pm \frac{\pi}{2}, \pi}=0$$
\begin{align*}
&\Rightarrow  \frac{\alpha[(1-\alpha^{2})(1-\cos(2\theta_{ji}))}{(1+\alpha^{2}+\sqrt{1+\alpha^{4}+2\alpha^{2}\cos(2\theta_{ji})}+2u_{o}^{2}/d_{io}^{2})^{2.5}}\\
&\frac{+u_{o}^{2}/d_{io}^{2}(\sqrt{1+\alpha^{4}+2\alpha^{2}\cos(2\theta_{ji})}-(\alpha^{2}+\cos(2\theta_{ji})))]}{(1+\alpha^{2}+\sqrt{1+\alpha^{4}+2\alpha^{2}\cos(2\theta_{ji})}+2u_{o}^{2}/d_{io}^{2})^{2.5}}\\
&=0.\\
&\Rightarrow \alpha=0, \alpha \to \infty.
\end{align*}
and $\Rightarrow$
\begin{align}
&(1-\alpha^{2})(1-\cos(2\theta_{ji}))+\nonumber\\
&\frac{u_{o}^{2}}{d_{io}^{2}}(\sqrt{1+\alpha^{4}+2\alpha^{2}\cos(2\theta_{ji})}-(\alpha^{2}+\cos(2\theta_{ji})))=0.
\label{eqn:maximum_unkown_extreme}
\end{align}
Since 
$$\frac{\partial^{2}\underline{C}^{-1}(O_{i,j}(o,u_o))}{\partial \alpha^{2}}|_{\theta_{ji} \neq 0, \pm \frac{\pi}{2}, \pi, \alpha=0, \alpha \to \infty} >  0,$$
$\frac{\partial \underline{C}^{-1}(O_{i,j}(o,u_o))}{\partial \alpha}|_{\theta_{ji} \neq 0, \pm \frac{\pi}{2}, \pi}$ also reaches its minimum at $\alpha=0$ or $\alpha \to \infty$. However, $\alpha=1$ cannot  make Equation \ref{eqn:maximum_unkown_extreme} established when $\theta_{ji} \neq 0, \pm \frac{\pi}{2}, \pi$, and thus, the extreme point is not at $\alpha=1$.
\end{proof}

\end{document}